%% file: main.tex
\documentclass[Afour,sageh,times]{sagej}

\input{include/packages}

\input{include/macros}

\newcommand\BibTeX{{\rmfamily B\kern-.05em \textsc{i\kern-.025em b}\kern-.08em
T\kern-.1667em\lower.7ex\hbox{E}\kern-.125emX}}

\def\volumeyear{2026}

\begin{document}

\runninghead{Cao et al.}

\title{HIGS: Hierarchical Implicit Grids for Joint Geometric and Semantic Scene Understanding}

\author{%
Hanwen Cao\affilnum{1$\dagger$},
Wenqiang Wu\affilnum{1,4*},
Kuang-Ting Tu\affilnum{1*},
Mathias Otnes\affilnum{1,5*},
Jeffrey Delmerico\affilnum{2}, Rui Wang\affilnum{2}, Yulun Tian\affilnum{3}
and Nikolay Atanasov\affilnum{1}
}

\affiliation{%
\affilnum{1}
University of California San Diego,
La Jolla, CA, USA.
\affilnum{2}
Microsoft Spatial AI Lab, Z\"{u}rich, Switzerland.
\affilnum{3}
University of Michigan,
Ann Arbor, MI, USA.
\affilnum{4}
Southern University of Science and Technology,
Shenzhen, Guangzhou, China.
\affilnum{5}
Norwegian University of Science and Technology,
Trondheim, Norway.
\affilnum{$\dagger$} Work done partially during an internship at Microsoft Research.
\affilnum{*}
Equal second author.
}

\corrauth{%
Hanwen Cao,
University of California San Diego,
La Jolla, CA, USA
}

\email{h1cao@ucsd.edu}

\begin{abstract}
Neural implicit representations have had a significant impact on scene reconstruction by enabling robots to build continuous, differentiable, and high-fidelity 3D maps. Most existing works focus on geometric reconstruction and lack semantic information for high-level spatial understanding and task planning. Also, as the scale and complexity of the environment increase, neural representations face the challenge of maintaining computational efficiency in back-end optimization. To resolve these two challenges, we introduce a hierarchical neural field that leverages multiresolution submaps to achieve an efficient and scalable implicit representation, and a unified query and decoding mechanism to support both geometric and semantic features. More specifically, the learnable map features can be converted to the output with the query and decoding process for both training and inference. For large-scale representation, we decompose a scene into overlapping submaps and do hierarchical optimization within each local submap, thus enabling scalable computation. To further improve efficiency, we design feature encoders that predict initial hierarchical grid features to substantially reduce the time needed to optimize the submap features from scratch. To correct estimation drift among submaps, we align and fuse them entirely within the implicit feature space, leading to substantial acceleration by avoiding the need to decode the final output. Building upon this efficient hierarchical representation, we embed both geometric features and vision-language latent features into the map, and demonstrate it on both Signed Distance Field (SDF) construction and open-vocabulary object grounding. Our approach significantly improves computation and memory efficiency, maintains high estimation accuracy, and endows the robot with spatial awareness on large-scale real-world benchmarks. We call our method \AlgName for \underline{H}ierarchical \underline{I}mplicit \underline{G}rid\underline{S}, with GS also emphasizing joint \underline{G}eometric and \underline{S}emantic scene understanding. Our project webpage is at: \url{https://existentialrobotics.org/HIGS_webpage}.
\end{abstract}

\keywords{Neural Implicit Representation, Signed Distance Field, Scene Understanding}

\maketitle

\begin{figure*}[!t]
    \centering
    \includegraphics[
        width=\textwidth
    ]{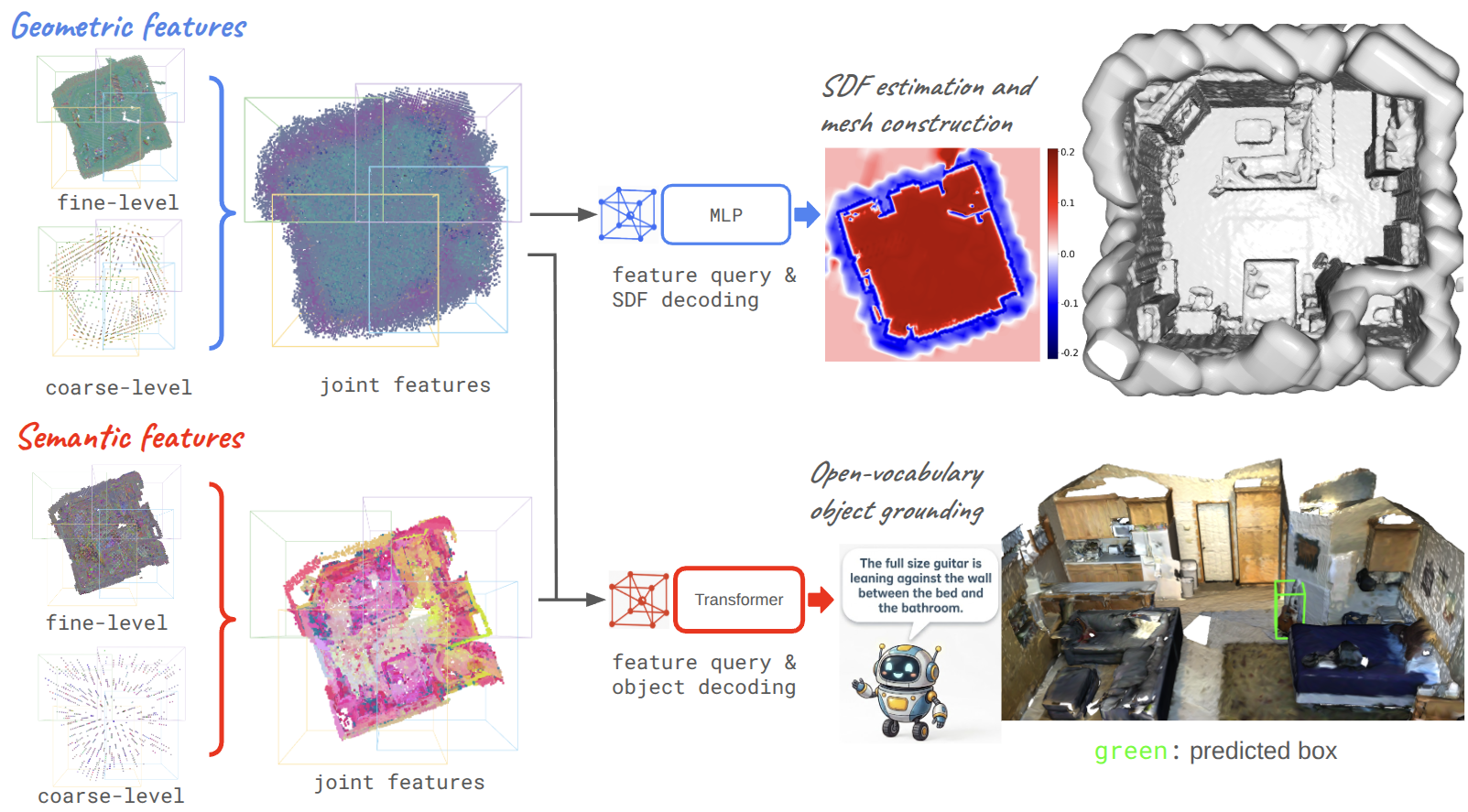}
    \caption{Visualization of the architecture of \AlgName, applied to an example from the ScanNet dataset~\citep{dai2017scannet}. The left part visualizes the hierarchical grids of implicit features using principal component analysis. The features are used for both signed distance function estimation and open-vocabulary object grounding.}
    \label{fig:teaser}
\end{figure*}

\setcounter{footnote}{0}
\renewcommand{\thefootnote}{\arabic{footnote}}

\input{sections/introduction}

\input{sections/related_work}

\input{sections/overview}

\input{sections/method}

\input{sections/experiments}

\input{sections/conclusion}

\bibliographystyle{SageH} 
\bibliography{main}

\begin{appendices}
    \input{sections/appendix_proof}
    \input{sections/appendix_details}
\end{appendices}

\end{document}

%% file: include/packages.tex
\usepackage{appendix}
\usepackage{moreverb,url}

\usepackage[colorlinks,bookmarksopen,bookmarksnumbered,citecolor=red,urlcolor=red]{hyperref}

\usepackage[table,dvipsnames]{xcolor}      
\usepackage{extarrows}                              
\usepackage{enumitem}
\usepackage{microtype}
\usepackage{cleveref}

\usepackage{amsmath,amssymb,amsfonts,amsthm,dsfont} 
\usepackage{algorithm,algorithmicx,listings}        
\usepackage[noend]{algpseudocode}			              
\usepackage{mathtools}

\usepackage{caption}
\usepackage{graphicx,tabularx,subcaption,booktabs,adjustbox}
\usepackage{pgfplots}
\pgfplotsset{compat=1.15}
\usepackage{tabularx, adjustbox, multirow}

\usepackage{xspace}
\usepackage[normalem]{ulem}
\useunder{\uline}{\ul}{}

\newtheorem{proposition}{Proposition}

\theoremstyle{definition}
\newtheorem{definition}{Definition}
\newtheorem{problem}{Problem}

%% file: include/macros.tex
\input{include/sym}

\newcommand{\eg}{\textit{e.g.}\xspace}
\newcommand{\ie}{\textit{i.e.}\xspace}
\newcommand{\etc}{\textit{etc.}\xspace}

\providecommand{\edit}[1]{{#1}}

\newcommand{\lidar}{{LiDAR}\xspace}
\newcommand{\AlgName}{{HIGS}\xspace}
\newcommand{\Point}{{Neural Points}\xspace}
\newcommand{\iSDF}{{iSDF}\xspace}

\newcommand{\cfeat}{c^\text{feat}}

\newcommand{\myParagraph}[1]{{\bf #1.}\xspace}

\newcommand{\Real}{\mathbb{R}}

\def \pinv {^\dagger}

\providecommand{\method}[1]{{\small \textsf{#1}}\xspace}

\DeclareMathOperator{\SE}{SE}

\DeclareMathOperator{\Exp}{Exp}
\DeclareMathOperator{\Log}{Log}

\providecommand{\norm}[1]{\left\|#1\right\|}

\newcommand{\Ecal}{\mathcal{E}}

\newcommand{\That}{\widehat{T}}

%% file: include/sym.tex
\newcommand{\bfc}{\mathbf{c}}
\newcommand{\bfd}{\mathbf{d}}

\newcommand{\bff}{\mathbf{f}}
\newcommand{\bfg}{\mathbf{g}}

\newcommand{\bfi}{\mathbf{i}}

\newcommand{\bfp}{\mathbf{p}}

\newcommand{\bft}{\mathbf{t}}

\newcommand{\bfv}{\mathbf{v}}

\newcommand{\bfx}{\mathbf{x}}
\newcommand{\bfy}{\mathbf{y}}
\newcommand{\bfz}{\mathbf{z}}

\newcommand{\bbR}{\mathbb{R}}

\newcommand{\calO}{\mathcal{O}}



%% file: sections/introduction.tex
\section{Introduction}


In recent years, \textit{neural fields} \citep{xie2022neural} have emerged as a powerful new frontier for scene representation in simultaneous localization and mapping (SLAM). Compared to conventional approaches based on hand-crafted features or volumetric representations, neural representations \citep{tosi2024nerfs} offer continuous and differentiable scene modeling, improved memory efficiency, and better handling of measurement noise. Despite these advantages, most existing approaches focus exclusively on geometric reconstruction, e.g., signed distance fields (SDF)~\citep{ortiz2022isdf}, neural radiance fields~\citep{zhu2022nice}, and \etc. While extracting high-fidelity geometry is crucial for low-level tasks like obstacle avoidance and navigation, high-level tasks like object search and physical interaction require semantic and language understanding.
Moreover, as the environment size and mission time grow, most existing approaches~\citep{zhu2022nice,tie20242} consider increasingly larger back-end optimization problems, which ultimately causes computational bottlenecks that limit their real-time performance.

A powerful idea to achieve a highly efficient map representation is to rely on a hierarchical data structure that explicitly disentangles coarse and fine information. Equipped with a hierarchical model, a robot can perform inference over varying spatial resolutions, e.g., by first capturing the core structure in the environment and then optimizing the fine details. In SLAM, this idea dates back to several seminal works such as \citet{estrada2005hierarchical,frese2005multilevel,grisetti2010hierarchical}. While hierarchical representations have achieved state-of-the-art performance in various computer vision tasks \citep{takikawa2021nglod,sun2022direct,muller2022instant}, standard neural SLAM back-ends rarely exploit them for feature embedding or accelerated optimization.

In this work, we present a unified formulation that utilizes \textit{multiresolution grids} and \textit{hierarchical optimization} to construct spatial models of geometric and semantic structure. Our formulation allows performing a significant portion of the back-end optimization in the \textit{implicit feature space}, yielding substantial gains in efficiency compared to existing methods \citep{tang2023mips,zhai2024vox}. To scale to larger environments, we adopt a submap-based design that models the environment as a collection of local neural implicit maps. In this context, we show that the proposed hierarchical optimization in implicit feature space significantly enhances both \textit{local submap optimization} and \textit{global submap fusion}.

We call our method \AlgName (\underline{H}ierarchical \underline{I}mplicit \underline{G}rid\underline{S}), and demonstrate that it achieves joint geometric and semantic understanding by applying it to neural signed distance field (SDF) construction \citep{ortiz2022isdf} and open-vocabulary object grounding~\citep{mcvay2025locate}. For the object grounding task, \AlgName serves as a memory-efficient representation and can be used by existing neural networks, e.g., \citet{mcvay2025locate} via point feature query. 
We also design a neural network that makes use of the geometric features to obtain fine-grained structure information and improve the object grounding accuracy with only semantic features, showing the benefit of joint embedding. We demonstrate our method's effectiveness on large-scale real-world datasets. An illustration of our method can be found in \Cref{fig:teaser}. Our contributions are summarized below.

\begin{itemize}
    \item We introduce \AlgName, a unified hierarchical 3D representation and optimization approach for joint geometric and semantic feature embedding. We improve the computation and memory efficiency via different techniques, and explore the synergy between the geometric and semantic features.
    
    \item For scalable and efficient feature mapping, we design multiple techniques, including pretrained decoders for hierarchical feature query, pre-trained encoders for latent grid feature initialization, submap decomposition, and hierarchical optimization in latent feature space for submap alignment.

    \item We show that geometric features and semantic features can enhance each other by designing an attention block integrating SDF features that improves object grounding accuracy and demonstrating that semantic features can help improve the submap alignment. 
    
    \item Experiments in large-scale real-world benchmarks are conducted to demonstrate the effectiveness of our method in both geometric construction (SDF) and semantic understanding (object grounding). For SDF construction, we show that \AlgName can get better construction accuracy with significant speedup (up to $40\times$). For object grounding, we show that \AlgName can significantly compress the memory (over 20$\times$) with small accuracy loss ($2\% - 3\%$), and increase the grounding accuracy up to $15$ percentage points by leveraging the SDF features. 
\end{itemize}


%% file: sections/related_work.tex
\section{Related work}

\subsection{3D neural implicit representations} 

Neural implicit representations offer continuous and differentiable modeling of 3D scenes with high fidelity, memory efficiency, and robustness to noise \citep{park2019deepsdf, mescheder2019occupancy, mildenhall2021nerf, azinovic2022neural}. Early methods such as DeepSDF \citep{park2019deepsdf} and NeRF \citep{mildenhall2021nerf} rely solely on 3D coordinates and a single multi-layer perceptron (MLP) to reconstruct the scene. However, this approach is insufficient for capturing larger scenes or complex details, prompting subsequent works to introduce hybrid methods that combine MLP decoders with additional implicit features.
The implicit features are commonly organized in a 3D grid \citep{fridovich2022plenoxels, sun2022direct,takikawa2021nglod,muller2022instant}.
To enable continuous scene modeling, trilinear interpolation is used to infer a feature at an arbitrary query location that is subsequently passed through the MLP decoder to predict the environment model (\eg, occupancy, distance, radiance). Recent works propose several alternative approaches to improve the memory efficiency over 3D feature grids.
K-Planes \citep{fridovich2023k} factorizes the scene representation into multiple 2D feature planes rather than using a full 3D grid. Similarly, TensoRF~\citep{chen2022tensorf} employs tensor decomposition to compactly represent radiance fields. 
PointNeRF~\citep{xu2022point} constructs the scene representation directly from point clouds by efficiently aggregating local features at surface points.
Hierarchical strategies for organizing the implicit features have been particularly effective at capturing different levels of detail while maintaining efficiency \citep{muller2022instant, sun2022direct, li2023neuralangelo, yu2021plenoctrees, takikawa2021nglod}. 
DVGO~\citep{sun2022direct} performs progressive scaling that gradually increases the feature grid resolution during training.
InstantNGP \citep{muller2022instant} significantly accelerates feature grid training and inference by introducing a multiresolution hash encoding scheme.
Neuralangelo \citep{li2023neuralangelo} extends this concept with a coarse-to-fine optimization scheme that preserves fine-grained details.  
Hierarchical representations have also been explored for fast RGB-D surface reconstruction \citep{wang2022go, azinovic2022neural}. 
Despite these advancements, achieving high-fidelity, efficient, and globally consistent reconstruction of large-scale scenes remains challenging. 
Our preceding work~\citep{tian2025miso} decomposes a scene into submaps, with each submap represented as hierarchical 3D grids, and designs hierarchical initialization and optimization to improve efficiency. 
While most existing works focus on geometric construction, in this work, we design hierarchical implicit grids as general feature fields including geometry, semantic, and language features, and demonstrate that this multi-modal representation delivers synergies in both geometric reconstruction and object grounding. 

\subsection{Scene understanding}

Language-aware 3D representations enable robots to connect natural-language queries with their surroundings.
OpenScene~\citep{peng2023openscene} aligns dense point features with vision-language embeddings, enabling open-vocabulary semantic retrieval through text–feature similarity. However, this query mechanism does not explicitly model inter-object relations for instance disambiguation. ConceptGraphs~\citep{gu2024conceptgraphs} organizes object point clouds and semantic descriptors into an open-vocabulary scene graph, supporting relational reasoning and language-based planning. Its object-level semantic abstraction differs from a continuous field supporting spatially resolved queries.
Locate-3D~\citep{mcvay2025locate} represents a scene with a point cloud created from posed RGB-D frames. To incorporate semantic and language features, they run CLIP~\citep{clip} and DINOv2~\citep{oquab2024dinov2} on 2D images and project them to corresponding 3D points. The featurized point cloud is fed into a transformer \citep{wu2024point} to capture spatial relations for accurate 3-D bounding box prediction. 
UniVLG~\citep{jain2025unifying} unifies 2D and 3D vision-language understanding by leveraging a pretrained 2D visual backbone and jointly training on RGB and posed RGB-D data. It introduces a language-conditioned mask decoder that directly grounds referring expressions to 3D object masks.
PQ3D \citep{zhu2024unifying} uses pre-computed object masks to create an object map with different features \citep{qi2017pointnet++,ghiasi2022scaling,schult2023mask3d} and treats the object grounding as a classification task. Qwen-3D \citep{lin2026qwen} extends large multimodal models to 3D by incorporating multi-view geometry and 3D rotary positional embeddings, allowing visual tokens to interact directly in a shared 3D coordinate space. 
Compared to existing work, our method focuses on efficient joint geometric and semantic scene representation. While existing works~\citep{zhu2024sni,zhu2025sni} explored geometric and semantic representation, their semantic understanding remains in closed-volcabulary semantic segmentation and does not capture inter-object relations. We design an object grounding model that directly uses implicit features for spatial reasoning and show that leveraging the geometric features can improve the object grounding results.

\subsection{Submap-based neural SLAM}

An effective strategy for large-scale 3D reconstruction is to partition the scene into multiple submaps. \citet{kahler2016real} create submaps storing truncated SDF values based on visibility criteria and align them by optimizing the relative poses of overlapping keyframes.
MIPS-Fusion~\citep{tang2023mips} extends this idea by incrementally generating MLP-based submaps based on the camera’s field of view and aligning them via point-to-plane refinement. 
Vox-Fusion++~\citep{zhai2024vox} adopts a dynamic octree structure for each submap and performs joint camera tracking and submap alignment by optimizing a differentiable rendering loss. 
Loopy-SLAM~\citep{liso2024loopy} uses a neural-point-based approach, creating submaps upon large camera rotations and later constructing a pose graph with iterative closest point (ICP) to detect loop closures. More recently, PLGSLAM~\citep{deng2024plgslam} combines axis-aligned tri-planes for high-frequency features with an MLP for low-frequency components, enabling multiple local representations to be merged efficiently.  NEWTON~\citep{matsuki2024newton} employs a spherical coordinate system to create local maps that accommodate flexible boundary adjustments. 
Multiple-SLAM~\citep{liu2023efficient} and CP-SLAM \citep{hu2024cp} consider collaborative scenarios and fuse local neural implicit maps from multiple agents.
Although effective, existing methods require reconstructing the scene’s geometry to align submaps, which can be costly and inaccurate in real-world settings. In contrast, our method aligns submaps directly in the feature space via hierarchical optimization, providing both fast and robust performance without explicit geometric reconstruction. We also demonstrate that semantic features can help improve submap alignment in a complementary way to geometric information.

%% file: sections/overview.tex
\section{Overview}

We aim to design a method for spatial scene understanding that captures geometric and semantic information. We consider the following two tasks.

\myParagraph{Task 1: Signed distance field estimation} An SDF is a scalar function that assigns to each point in 3D space its distance to the nearest surface, together with a sign indicating which side of the surface the point lies on. Following common convention, the distance is positive outside of occupied space, negative inside, and zero on the boundary. Continuous and differential approximations of SDF are widely used in scene reconstruction and robot collision avoidance. Given RGB-D images, the robot builds an implicit map that allows SDF prediction for query points, which can be used for scene construction with marching cubes~\citep{lorensen1998marching}. To evaluate the quality of scene construction, we compute the Chamfer distance between the point clouds sampled on the constructed mesh and the ground-truth mesh.

\myParagraph{Task 2: Open-vocabulary object grounding} Open-vocabulary object grounding is the task of localizing an object in a 3D bounding box from a text description that may include a combination of attributes (e.g., ``red backpack'') and/or spatial relationships (e.g., ``the fridge near the brown table''). It is a key capability in scene understanding and is useful for many robotics tasks, \eg, instruction-based navigation and loco-manipulation. Given RGB-D images, the robot builds an implicit map that stores vision-language features, with which it can predict the 3D bounding box of the queried object in the text description. The quality of the object grounding is evaluated by computing the accuracy at certain IoU (Intersection over Union) thresholds between the predicted and ground-truth boxes.

In the following sections, we introduce our method, \AlgName, which achieves efficient geometric and semantic modeling and the ability to solve the two task above. We first introduce our hierarchical grid structure and feature mapping in \Cref{sec:local_mapping}. Each grid has its associated latent feature. The key to feature mapping is a query and decoding process that connects the latent features to the raw image features. To enable efficient feature mapping, we design an encoder to initialize the grid features so that the optimization can start from a good initial point, which is introduced in \Cref{sec:feature_initialization}. The obtained neural field can already be used for SDF construction. For object grounding, we introduce two ways of using our hierarchical 3D grids in \Cref{sec:open_vocabulary_understanding}: a training-free way using off-the-shelf models and feature queries, and our own model that directly takes in the 3D grids and uses SDF features to guide the bounding box prediction. Finally, we extend our method to SLAM by making sensor poses optimizable variables and introducing efficient submap alignment using the latent hierarchical grid features in \Cref{sec:SLAM_extension}.

%% file: sections/method.tex
\section{Methodology}


We introduce our method in this section and cover the details of each module mentioned in the overview.

\subsection{Hierarchical 3D grids}
\label{sec:local_mapping}

\subsubsection{Overview}

We first introduce our hierarchical 3D grids and the query and decoding process.

\begin{figure}[!ht]
    \centering
    \includegraphics[width=\linewidth]{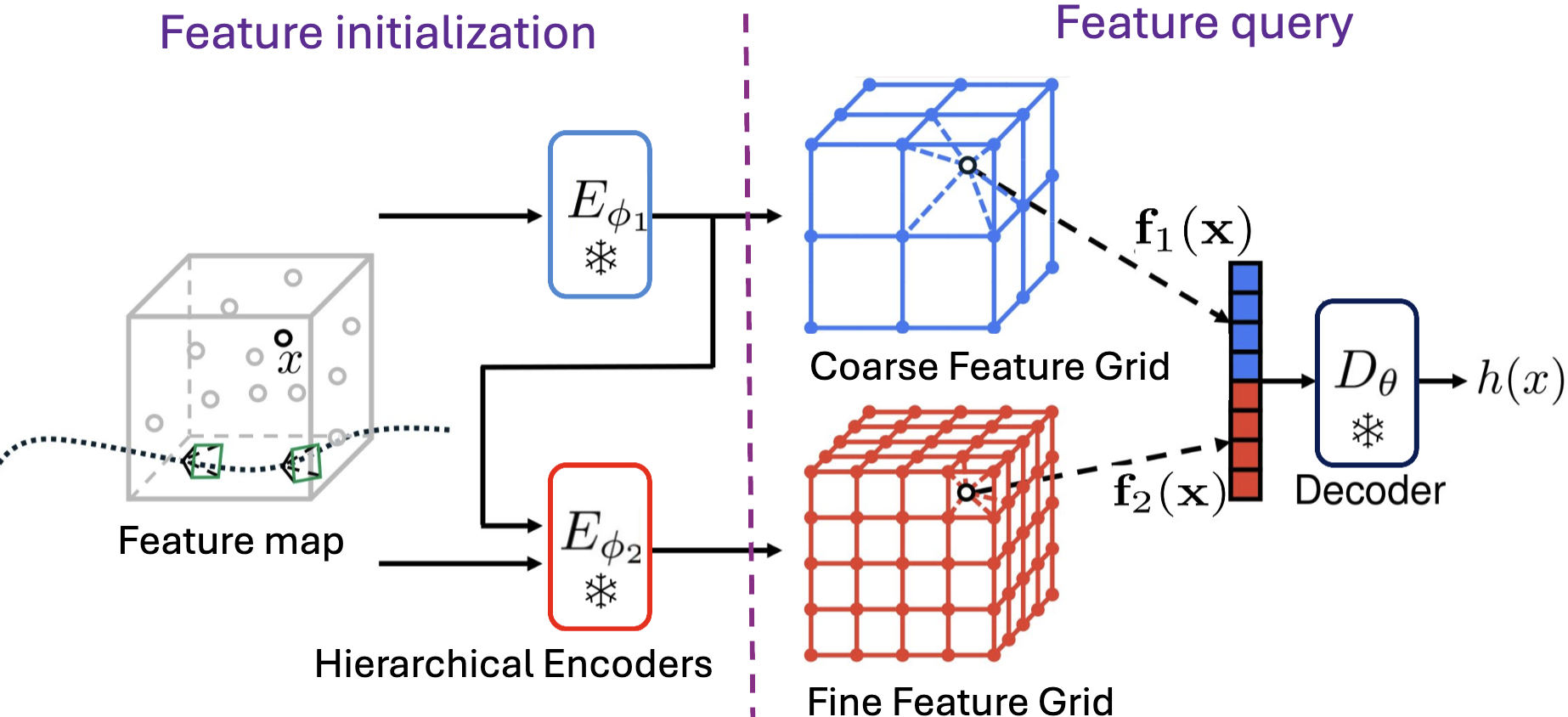}
    \caption{\textbf{Illustration of the hierarchical grids.} 
    Left part: hierarchical feature initialization using pretrained encder (\Cref{sec:feature_initialization}). Right part: given point cloud observations, \AlgName performs local feature mapping with multiresolution feature grids via feature query and decoding (\Cref{sec:local_mapping}). 
    }
    \label{fig:hierarchical_grid}
\end{figure}

\begin{definition}[(Multiresolution Feature Grid)]
\label{def:grid}
A \textit{multiresolution feature grid} contains $L>1$ levels of regular grids with increasing spatial resolution ordered from coarse ($l=1$) to fine ($l=L$).
Each level $l$ stores features $F_l = \{ \bff_{l,i} \}_{i=1}^{N_l} \in \bbR^{N_l \times d_l}$, where $d_l$ is the feature dimention at level $l$. For each vertex $\bfz_{l,i}$, there is a function $q_l(i)$ to index its corresponding feature $\bff_{l,q_l(i)}$. 
Together with a kernel function $k_l: \Real^3 \times \Real^3 \to \Real$, the feature grid defines a continuous feature field 
\begin{equation}
    f_l(\bfx)= \sum_{i \in I_l} k_l(\bfx, \bfz_{l,i}) \bff_{l,q_l(i)},
    \label{eq:neural_field_def}
\end{equation}
where $\bfx \in \Real^3$ is any query position, and $I_l$ indexes over involved vertices at level $l$.
To obtain the output $h(\bfx)$ (\textit{e.g.}, signed distance or vision-language feature) at query position $\bfx$, the features at different levels are concatenated (denoted by $\oplus$) and processed by a decoder network $D_\theta$,
\begin{equation} \label{eq:multiresolution_grid}
h(\bfx; F, \theta) = D_\theta \bigl(
	\oplus_{l \in [L]}
	f_l(\bfx)
\bigr).
\end{equation}
The model has the set of features from all levels $F$ and the decoder parameters $\theta$ as learnable parameters. An illustration of the feature query and decoding can be found in the right part of \Cref{fig:hierarchical_grid}. Details of grid size and latent feature dimensions can be found in \Cref{sec:grid_szie_and_feat_dim}.
\end{definition}

We implement the kernel functions $k_l$ using trilinear interpolation. Given the grid size, a scene is divided into $L_l \times W_l \times H_l$ grid indices with $L_l$, $W_l$, and $H_l$ corresponding to the number of indices along axes $x$, $y$, and $z$ respectively.
We demonstrate two separate types of features in the hierarchical grids.
Depending on the types of features, we introduce both dense and sparse grid features and the corresponding feature indexing function $q_l(i)$. 
For tasks where the grid feature dimension is low and free-space estimation is useful, \textit{e.g.}, SDF, we store dense grid features where there is a corresponding latent feature for every grid, \ie, the number of features $N_l = L_l \times W_l \times H_l$. In that case, the index function is simply $q_l(i)=i_x \times L_l + i_y \times W_l + i_z \times H_l$. For tasks where the grid feature dimension is high and free space is meaningless, \eg, vision-language features, we use sparse grids where we only store features for the near-surface grids. In that case, given the grid vertex $\bfz_{l,i}$, the index function $q_l(i)$ needs to either return an invalid index if there is no associated feature or the index in the feature array $F_l$. In the case of an invalid index, the associated feature $\bff_{l,q_l(i)}=\mathbf{0}$ so it does not play a role in \eqref{eq:neural_field_def}. We implement $q_l(i)$ as a lookup tensor $Q_l \in \bbR^{L_l \times W_l \times H_l}$, where value $-1$ indicates no associated feature, and values $1, ..., N_l$ indicate the index in $F_l$. The function then becomes $q_l(i)=Q_l(i_x, i_y, i_z)$. We choose this design since the 1-dimensional index tensor consumes negligible memory compared to the high-dimensional features, and it allows $\calO(1)$ lookup. One can choose different implementations based on the requirements, \eg, a hash function.

While the multiresolution feature grid offers a powerful representation, training the decoder $D_\theta$ in \eqref{eq:multiresolution_grid} online not only presents a computational burden but also leads to unreliable generalization or catastrophic forgetting~\citep{tosi2024nerfs}. 
To avoid this, we pre-train the decoder $D_\theta$ offline over multiple scenes, which are implemented with MLPs (details in \Cref{sec:map_decoders}). During online mapping, the decoder weights are fixed, and the robot only needs to optimize the grid features. 

Consider feature mapping in map $s$, we are given a set of keyframes with associated poses $\{\hat{T}_k^s\}_k$, and observations $\{X^k\}_k \ \{Y^k\}_k$, where each $X^k = \{\bfx^k_1,\ldots,\bfx^k_{m_k}\} \subset \Real^3$ is a point cloud observed in frame $k$, and $Y^k=\{\bfy^k_1,\ldots,\bfy^k_{m_k}\} \in \bbR^{d}$ are observations associated with each point, \eg, SDF values or vision language features. 
We provide an illustration of point feature lifting in Figure~\ref{fig:feature_lifting}. Given an RGB image, we first run SAM~\citep{kirillov2023segment} to get object masks. Each object mask is then cropped into an image patch, and we run CLIP~\citep{clip} to get features for each cropped image patch. The features are lifted to 3D by using the depth image, producing the point cloud $\{X^k\}_k$ and associated features $\{Y^k\}_k$. For SDF measurements, we only need depth images or LiDAR scans. Following \citet{ortiz2022isdf}, we sample points along the rays. For near-surface points, the SDF values are computed as the distance from the points to the surface along the ray direction; For free-space points, we compute lower and upper bounds, which we will describe in more detail in the following paragraphs.

\begin{figure}[t]
    \centering
    \includegraphics[width=\linewidth]{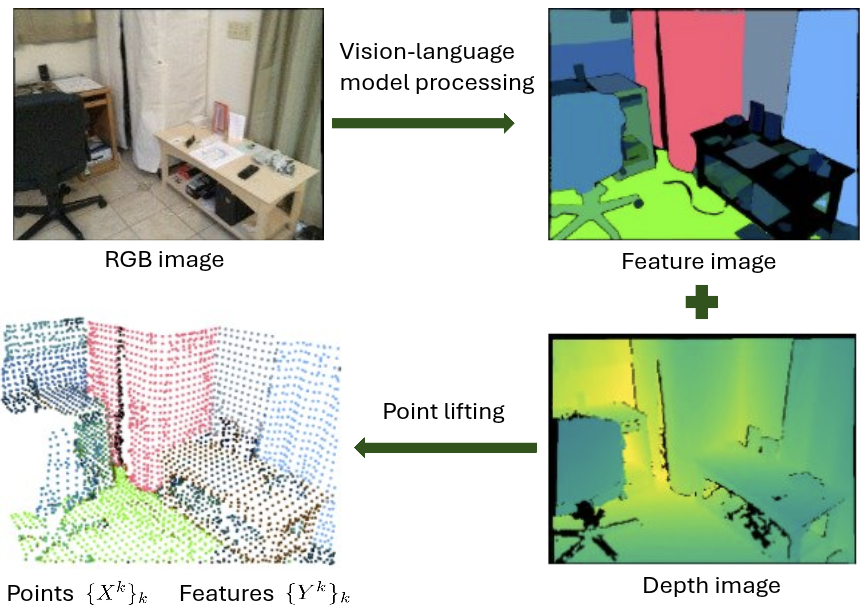}
    \caption{\textbf{Illustration of feature lifting.} Given an RGB image, we obtain image-plane features from a vision-language model (visualized via principal component analysis). The pixel coordinates can be transformed to 3D using a corresponding depth image. The lifted 3D points and associated features are used to query and optimize the latent 3D grids in \AlgName.}
    \label{fig:feature_lifting}
\end{figure}

\begin{problem}[(Local feature mapping)] \label{prob:local_mapping}
Given $n$ images with associated poses $\{T_k^s\}_k$ in the reference frame of submap $s$ and associated point-cloud observations $\{X^k\}_k$, the local mapping problem is defined as,
\begin{equation}
\label{eq:local_mapping}
\underset{F}{\min}
	 \!
	 \sum_{k=1}^n \sum_{j=1}^{m_k} 
        c_j\bigl(
            h(T^s_k \bfx^k_j; F)
        \bigr), 
\end{equation}
where $c_j$ is a cost function associated with the $j$-th observation. We drop the dependence of $h$ on the decoder parameters $\theta$ to reflect that the decoder is trained offline and fixed during mapping. We introduce how we define cost functions for different features in the following.
\end{problem}

\subsubsection{Cost functions}
\label{sec:cost_functions}
Our hierarchical grid model can be applied to different types of features, depending on the choice of cost functions $c_j$. We introduce the cost functions for the SDF and vision-language fields separately below.

\myParagraph{Cost functions for SDF reconstruction}
We follow iSDF \citep{ortiz2022isdf} to design cost terms for SDF reconstruction from point cloud observations. Specifically, we classify all observed points as either (i) on or near surface (default to $30$ cm as in iSDF), or (ii) in free space. For on or near surface observations, the cost function $c_j = c_j^{\text{sdf}}$ is based on direct SDF supervision,
\begin{equation}
	c_j^{\text{sdf}} \left(h^{\text{sdf}}(\bfx_j)\right) = w_j^\text{sdf} \left |h^{\text{sdf}}(\bfx_j) - y^{\text{sdf}}_j \right|,
	\label{eq:sdf_residual}
\end{equation}
where $w_j^\text{sdf}>0$ is a weight (default to $5.4$) and $y^{\text{sdf}}_j \in \Real$ is a measured SDF value on or near surface obtained in the same way as iSDF by computing distance along the ray from the points to the surface intersection point.

For free-space observations, we use the cost to enforce bounds on the SDF values. Specifically, we follow iSDF to obtain lower and upper bounds $\underline{b}_j, \bar{b}_j$ on the SDF from sensor measurements, and define $c_j = c_j^{\text{bnd}}$ as,
\begin{align}
    c_j^\text{lo} \left(h^{\text{sdf}}(\bfx_j) \right) &= \max (e^{\beta (\underline{b}_j - h^{\text{sdf}}(\bfx_j))} - 1, \; 0), \\
    c_j^\text{up} \left(h^{\text{sdf}}(\bfx_j) \right) &= \max (h^{\text{sdf}}(\bfx_j) - \bar{b}_j, \; 0), \\
    c_j^{\text{bnd}} \left(h^{\text{sdf}}(\bfx_j) \right) &= \max(c_j^\text{lo} \left(h^{\text{sdf}}(\bfx_j) \right), c_j^\text{up} \left(h(\bfx_j) \right) ).
    \label{eq:bnd_residual}
\end{align}
This cost applies an exponential penalty ($\beta=5$ by default) for the lower bound and a linear penalty for the upper bound.
This is because, in practice, violation of the lower bound is usually more critical, \textit{e.g.}, if $\underline{b}_j = 0$ and the model predicts negative SDF values. We do not include Eikonal regularization \citep{gropp2020implicit} because we observed that it has limited impact on accuracy while making the optimization slower.

\myParagraph{Cost functions for vision-language reconstruction}
For vision-language features, points in the free space have no semantic meaning, so we only consider on-surface points. Given the feature $\bfy^{vl}_j$ obtained from a vision-language model (we choose CLIP~\citep{clip}) with corresponding 3D coordinate $x_j$ obtained from a depth image or LiDAR scan), we compute the cosine similarity loss
\begin{align}
    c^{\text{vl}}_j\left(h^{\text{vl}}(\bfx_j)\right) = 1 - \cos\left( h^{\text{vl}}(\bfx_j), \bfy^{\text{vl}}_j \right).
\end{align}
\subsubsection{Grid sizes and latent feature dimensions}
\label{sec:grid_szie_and_feat_dim}
By default, we have two levels of features, and we set the grid size to be $0.5$ cm at the coarse level and $0.1$ cm at the fine level. For SDF features, the latent grid feature dimension at each level is set to $4$, while for vision-language features, the dimension at each level is set to $64$.


\subsection{Hierarchical feature initialization}
\label{sec:feature_initialization}

In practice, the bulk of the computational cost is incurred by the optimization over the high-dimensional grid features $F$.
To address this challenge, we propose a method that leverages the structure of the hierarchical grids to learn to initialize $F$ from sensor observations. 
Our key intuition is that, at any level, an effective initialization can be obtained by accounting for optimization results from the previous levels.

In the following, we use $F_l$ to denote the subset of latent features at level $l$,
and $F_{1:l}$ denote all latent features up to and including level $l$.
We consider the problem of initializing $F_l$ given fixed submap poses and coarser features $F_{1:l-1}$. 
This amounts to solving the following subproblem of \Cref{prob:local_mapping},
\begin{equation}
\label{eq:level_local_mapping}
\underset{{F_l}}{\min}
	\quad \sum_{k=1}^n \sum_{j=1}^{m_k} 
		c_j \left(
		h(T^s_k \bfx^k_j; F_{1:l-1}, F_l, 0_{l+1:L})
		\right),
\end{equation}
where we explicitly expand $F$ into the (known) coarser features $F_{1:l-1}$, 
the target feature to be initialized $F_l$, and the finer features (assumed to be zero).

To develop our approach, we first present theoretical analysis and derive a closed-form solution to \eqref{eq:level_local_mapping} in the linear-least-squares case. Using insight from the analytical solution, we develop a learning approach to initialize the grid features at each level applicable to the general (nonlinear) case in \eqref{eq:level_local_mapping}.

\myParagraph{Special case: linear least squares}
Suppose that the decoder $D_\theta$ in \Cref{def:grid} is a linear function, and assume that the cost function $c_j$ in \eqref{eq:level_local_mapping} is quadratic, \eg, $c_j(h(\bfx_j)) =  (h(\bfx_j) - \bfy_j)^2$. Problem \eqref{eq:level_local_mapping} becomes a linear least squares problem, for which we can obtain a closed-form solution from the normal equations, as shown next.

\begin{proposition}[]
\label{lem:linear_case}
With linear decoder $D_\theta$ and quadratic costs $c_j(h(\bfx_j)) =  (h(\bfx_j) - \bfy_j)^2$,
the optimal solution to \eqref{eq:level_local_mapping} is:
\begin{equation}
    F_l^\star 
    = E(r_{1:l-1}(\bfx))
    := - \left[
    J^\top J
    \right]\pinv
    J^\top r_{1:l-1}(\bfx),
    \label{eq:level_local_mapping_closed_form}
\end{equation}
where $\bfx = \{T^s_k \bfx^k_j\}$ and $\bfy = \{\bfy_j\}$ collect all observed points and labels in two vectors,
$J = \partial h(\bfx;F) / \partial F_l$ is the Jacobian matrix evaluated at $x$, 
and $r_{1:l-1}(\bfx)$ are the residuals of prior levels, represented in vector form as,
\begin{equation}
	r_{1:l-1}(\bfx) = h(\bfx; F_{1:l-1}, 0_{l:L}) - \bfy.
    \label{eq:level_l_residuals}
\end{equation}
\end{proposition}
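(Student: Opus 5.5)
The plan is to show that, under the stated assumptions, the objective in \eqref{eq:level_local_mapping} is a linear least-squares problem in $F_l$ alone, and then read off the minimizer from the normal equations.

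\textbf{Step 1: $h$ is affine in $F_l$.} By \Cref{def:grid}, each level field $f_l(\bfx)=\sum_{i\in I_l} k_l(\bfx,\bfz_{l,i})\bff_{l,q_l(i)}$ is linear in $F_l$. The kernel weights are fixed once the query points $\bfx=\{T^s_k\bfx^k_j\}$ are fixed, since poses and points are given. Concatenation is linear. So if $D_\theta$ is linear (or affine, which we handle by absorbing the bias), then $h(\bfx;F)$ is affine in the stacked feature vector $F=(F_{1:l-1},F_l,F_{l+1:L})$. Moreover, it splits additively across levels. Writing $D_\theta(\oplus_l f_l) = \sum_l A_l f_l + b$, where $A_l$ is the block of the decoder's matrix acting on level $l$, we get
\[
h(\bfx; F_{1:l-1}, F_l, 0_{l+1:L}) = h(\bfx; F_{1:l-1}, 0_{l:L}) + J\,\mathrm{vec}(F_l).
\]
Here $J=\partial h(\bfx;F)/\partial F_l$ is a constant matrix, independent of $F$. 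It is assembled by stacking, over all observations, $A_l$ times the interpolation weights. The bias $b$ and the zeroed finer levels are contained in the first term.

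\textbf{Step 2: rewrite the objective.} Stack the residuals over all $(k,j)$ and subtract $\bfy$. The quadratic costs sum to
\[
\bigl\|J\,\mathrm{vec}(F_l) + r_{1:l-1}(\bfx)\bigr\|_2^2,
\]
with $r_{1:l-1}$ as in \eqref{eq:level_l_residuals}. This is an unconstrained convex quadratic in $F_l$.

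\textbf{Step 3: solve the normal equations.} Setting the gradient to zero gives $J^\top J\,F_l = -J^\top r_{1:l-1}(\bfx)$. These normal equations are always consistent, because $J^\top r$ lies in $\mathrm{range}(J^\top)=\mathrm{range}(J^\top J)$. Every solution is a global minimizer by convexity. The Moore--Penrose choice $F_l^\star = -[J^\top J]\pinv J^\top r_{1:l-1}(\bfx)$ solves them, using the identity $J^\top J (J^\top J)\pinv J^\top = J^\top$. This choice also equals $-J\pinv r_{1:l-1}$, the minimum-norm minimizer. When $J$ has full column rank it is the unique optimum.

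\textbf{Main obstacle.} The subtle point is the rank-deficient case. Some grid vertices may receive no observations, and for sparse grids invalid indices give zero features; either way $J^\top J$ is singular. The statement should then be read as ``an (the minimum-norm) optimal solution.” Consistency via the range argument above handles this. A secondary bookkeeping issue is the vectorization convention when $F_l\in\bbR^{N_l\times d_l}$ and outputs are vector-valued. I would treat $\mathrm{vec}(F_l)$ and the stacked outputs as long vectors throughout, so that $J$ is an ordinary matrix.
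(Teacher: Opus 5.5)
Your proposal is correct and follows the same route as the paper's proof: write $h$ as a sum over levels, split the residual into the prior-level residual $r_{1:l-1}(\bfx)$ plus $J F_l$, reduce to the linear least-squares problem $\min_{F_l}\|r_{1:l-1}(\bfx) + J F_l\|_2^2$, and read off the solution from the normal equations. Your treatment of the affine bias, the consistency of the normal equations, and the rank-deficient case (where the pseudoinverse gives the minimum-norm minimizer rather than a unique one) fills in details the paper leaves implicit.
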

\begin{proof}
Please refer to Appendix~\ref{sec:proof}.
\end{proof}

Proposition \ref{lem:linear_case} reveals an interesting structure of the optimal initialization $F_l^\star$:
it is as a function of the prior levels' residuals $r_{1:l-1}(x)$. We will build on this insight to approach the problem in the general case.

\myParagraph{General case: learning hierarchical initialization}
We take inspiration from \Cref{lem:linear_case} to develop a learning method for feature initialization in the general case. 
We replace $E$ in \Cref{lem:linear_case} with a neural network \textit{encoder} $E_{\phi_l}$ to approximate $F^\star_l$ from the residuals $r_{1:l-1}(\bfx)$,
\begin{equation}\label{eq:encoder}
    F_l^\star \approx E_{\phi_l} (r_{1:l-1}(\bfx)),
\end{equation}
where $\phi_l$ are the encoder parameters. 
Before predicting the coarsest-level features, its corresponding features are initialized randomly. After one level of features are initialized, we compute the residuals for the next level following \eqref{eq:level_l_residuals}, and use the corresponding encoders \eqref{eq:encoder} to initialize the remaining levels in a recursive way. An illustration can be found in the left part of \Cref{fig:hierarchical_grid}. For different types of features, we train a separate encoder $E_{\phi_l}$ to initialize the feature grid $F_l$ at each level $l$. 
The input to the encoder is represented as a point cloud. Each 3D position $\bfx_j \in \Real^3$ in the submap frame has an associated feature residual. The initial point residual features are first pooled onto the 3D voxel grid that they lie in. See \Cref{fig:encoder} for an illustration. For the SDF features, we then use a 3D CNN to process the pooled grid features. For the vision-language features, since we only store features in surface cells, we use a Fast Point Transformer (FPT) \citep{park2022fast} to process the pooled features. Finally, we use an MLP to predict the grid features. Our encoder design is illustrated in \Cref{fig:encoder}.  Next, we present the residual inputs for both the SDF and the visual-language features.

\begin{figure}[t]
    \centering
    \includegraphics[width=\linewidth]{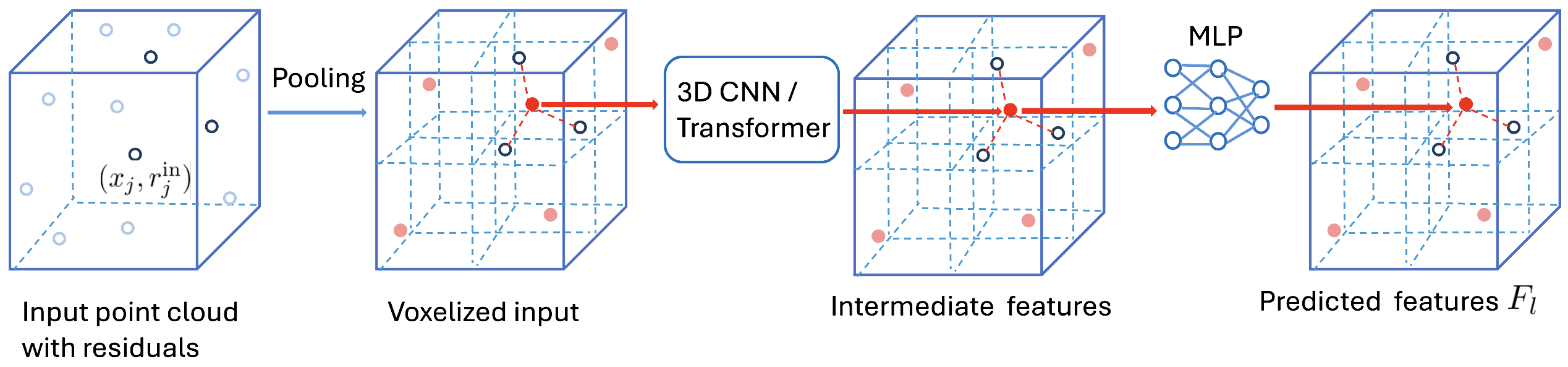}
    \caption{Illustration of the level-$l$ encoder $E_{\phi_l}$. Input point cloud with residuals $\{x_j, r_j^\text{in}\}_j$ is voxelized via averaging pooling and processed by a 3D CNN or Fast Point Transformer~\citep{park2022fast} depending on whether the grid is dense or sparse. The neural network outputs at all vertices are then transformed via an MLP to predict the target feature grid $F_l$. }
    \label{fig:encoder}    
\end{figure}

\myParagraph{SDF input residuals}
For SDF features, we use the following measurement residuals to construct the encoder input
$r_j^{\text{sdf}} = \begin{bmatrix}
r_{j,1}^{\text{sdf}} &  r_{j,2}^{\text{sdf}}  & r_{j,3}^{\text{sdf}} 
\end{bmatrix}^\top \in \Real^3$:
\begin{align}
	r_{j,1}^{\text{sdf}} &= \begin{cases}
	h^{\text{sdf}}(\bfx_j) - y_j, & \text{if $\bfx_j$ near surface,} \\
	0, & \text{otherwise,}
	\end{cases} 
	\label{eq:encoder_input_sdf} \\ 
	r_{j,2}^{\text{sdf}} &= \begin{cases}
	\max (h(\bfx_j) - \bar{b}_j, 0), & \text{if $\bfx_j$ in free space,} \\
	0, & \text{otherwise,}
	\end{cases}
	\label{eq:encoder_upper_bnd} \\
	r_{j,3}^{\text{sdf}} &= \begin{cases}
	\max (\underline{b}_j - h(\bfx_j), 0), & \text{if $\bfx_j$ in free space,} \\
	0, & \text{otherwise.}
	\label{eq:encoder_lower_bnd}
	\end{cases}
\end{align}
The first residual $r^{\text{in}}_{j,1}$ corresponds to the SDF cost \eqref{eq:sdf_residual}.
The remaining two features correspond to the residuals of the upper and lower bounds used to compute \eqref{eq:bnd_residual}.

\myParagraph{Visual-language residuals}
We only have one type of residual for the vision-language features:
\begin{align}
    r_{j}^{\text{vl}} = h^{\text{vl}}(\bfx_j) - y^{\text{vl}}_j
\end{align}
Similar to the decoder, we train the encoders offline using submaps from multiple environments.
When training the level $l$ encoder $E_{\phi_l}$,
we use a training loss based on \eqref{eq:level_local_mapping},
\begin{equation}
\label{eq:encoder_training_loss}
\underset{{\phi_l}}{\min} 
	\sum_{s \in S} \sum_{j \in J_s} 
		c_j \left(
		h(\bfx^s_j; F^s_{1:l-1}, E_{\phi_{l}}(r^s_{1:l-1}), 0_{l+1:L})
		\right),
\end{equation}
where $S$ contains the indices of all training submaps, $J_s$ contains the indices of all points in submap $s$, and $F^s$ denotes the features for submap $s$.
During training, we use noisy poses within the submaps to compute $\bfx^s_j$ to account for the possible pose estimation errors at test time. Combined with the initialization encoder, our local mapping approach is summarized in \Cref{alg:hier_local_mapping}. Details of encoder implementation and training can be found in \Cref{sec:map_encoders}.


\begin{algorithm}[t]
	\caption{\small \textsc{Hierarchical Local Mapping}}
	\label{alg:hier_local_mapping}
	\begin{algorithmic}[1]
		\small 
		\Function{$\{T_k^s\}_k, F$ = HierarchicalLocalMap}{}
        \For{level $l = 1, 2, \hdots, L$}
            \State Initialize features at level $l$:
             $F_{l} \leftarrow E_{\phi_{l}}(r_{1:l-1}(x)). $
		\EndFor \label{alg:hier_local_mapping:init}
		\State From the initialized values, update features $F$ by minimizing \eqref{eq:local_mapping}.
            \State \Return $F$.
		\EndFunction
	\end{algorithmic}
\end{algorithm}

\subsection{Open-vocabulary object grounding}
\label{sec:open_vocabulary_understanding}

\begin{figure*}[t]
    \centering
    \begin{subfigure}[t]{0.5\linewidth}
        \centering
        \includegraphics[width=\linewidth]{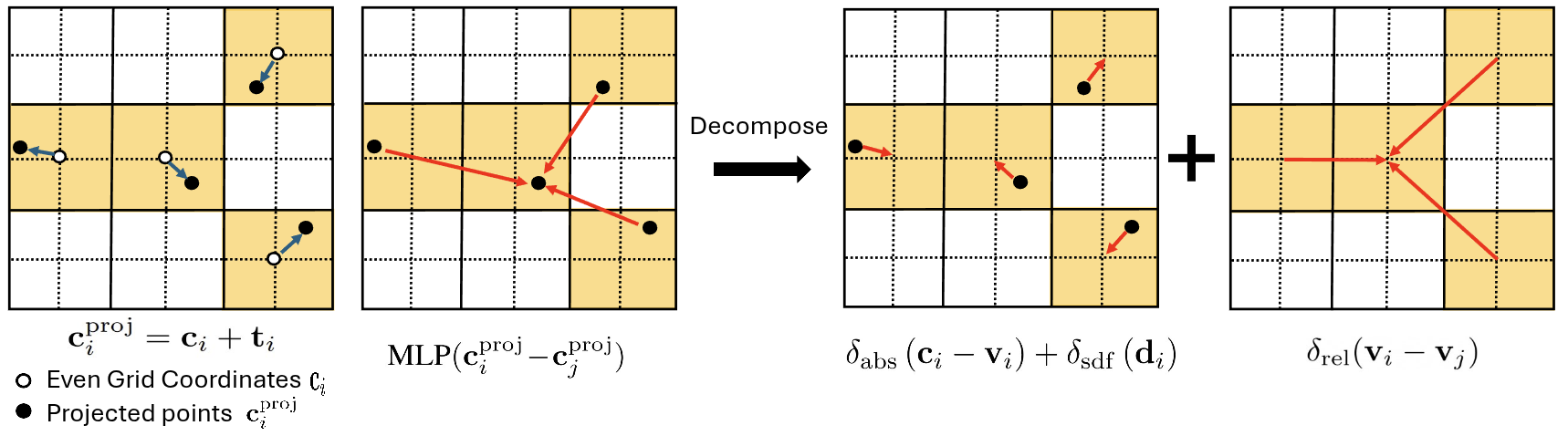}
        \caption{Our model: SDF-guided encoding.}
        \label{fig:sdf_guided_encoding}
    \end{subfigure}
    \hspace{0.5cm}%
    \begin{subfigure}[t]{0.45\linewidth}
        \centering
        \includegraphics[width=\linewidth]{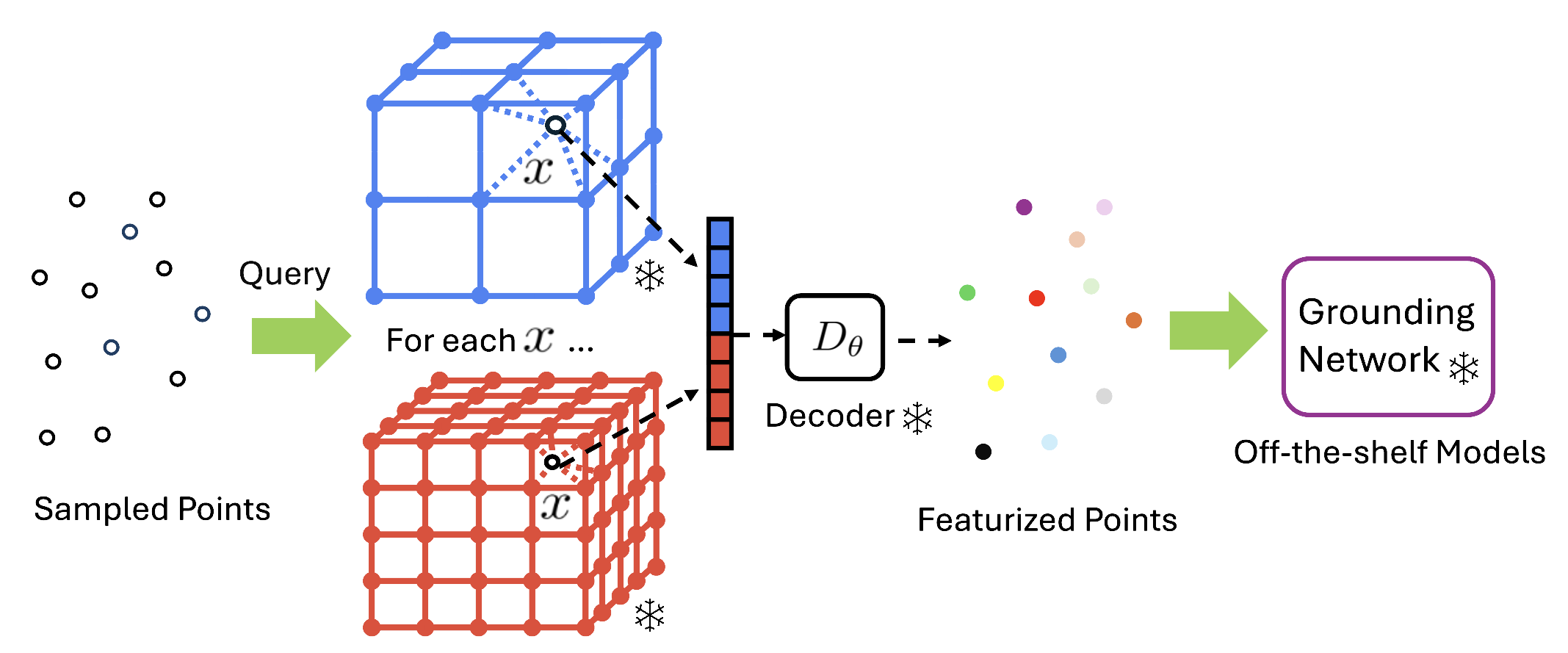}
        \caption{Training-free inference.}
        \label{fig:training_free_inference}
    \end{subfigure}%
    \caption{\textbf{\AlgName for open-volcabulary object grounding.}
    (a) We design an object grounding model using SDF features to get fine-grained structure information from regular grids. Since the grid coordinates are distributed evenly in the space, they do not contain enough spatial information. We assume the SDF feature indicates the surface projection of these coordinates ({\textcolor[RGB]{0,0,168} {blue arrows}}) so we can compute attention using the projected coordinates $\bfc_i^{\text{proj}}$ ({\textcolor[RGB]{240,0,0} {red arrows}}). We follow FPT~\citep{park2022fast} to use coordinate decomposition and integrate SDF features for the continuous positional embedding.
    (b) Given sampled points, we query and decode point features from \AlgName, which can be fed to an off-the-shelf model for object grounding.
    }
    \label{fig:object_grounding}
\end{figure*}

We introduced hierarchical grids as an efficient neural field representation for capturing geometric (SDF) and semantic (vision-language) structure of the environment.
In this section, we show that the vision-language field is useful for spatial semantic understanding by demonstrating it on open-vocabulary object grounding.
We first design our object grounding model that leverages both geometric and semantic grid features to get accurate box prediction for the queried object. Then, we show that our feature maps can be used by existing models for training-free inference.

\subsubsection{SDF-guided object grounding} We design our object grounding model
\label{sec:sdf_guided_grounding}
that takes in grid features, query text, and predicts the 3D bounding box of the queried object. 
Since the query text may involve position information (\eg, one object next to another object),
capturing spatial information and feature correlation in neural network models is needed, which usually requires some sort of neighborhood search. Compared to an unordered point cloud that needs time-consuming search, \eg with K Nearest Neighbors (KNN) \citep{qi2017pointnet++,zhao2021point}, 3D grids allow $\calO(1)$ neighborhood queries. 
Despite the query efficiency, relying directly on the evenly distributed grid coordinates to predict object bounding boxes leads to inaccurate positions or box sizes since the grid coordinates are not necessarily on the object surface and fail to capture fine-grained structure. Instead, we supplement the vision-language features with the SDF features, which implicitly specify the distance to the object surfaces. 
We use a Fast Point Transformer (FPT) \citep{park2022fast} to apply attention to the grid features. We first give a brief overview of FPT and then describe how we integrate SDF features to guide the FPT attention. Given input point cloud $\mathcal{P}^{\mathrm{in}} = \{(\mathbf{p}_n, \mathbf{i}_n)\}$ with coordinates $\bfp_n$ and features $\bfi_n$, FPT first voxelizes $\mathcal{P}^{\mathrm{in}}$ into voxels $\mathcal{V} = \{(\mathbf{v}_i, \mathbf{f}_i, \bfc_i)\}$, where $\bfv_i$ are the voxel indices, $\bff_i$ are the aggregated voxel features, and $\bfc_i$ are the voxel centers, i.e., average point coordinates in the voxel. 
%
For each voxel center $\bfc_i$, FPT performs self-attention within local neighbor indices $\mathcal{N}(i)$, formulated as:
\begin{equation}
    \mathbf{f}'_i
    =
    \sum_{j \in \mathcal{N}(i)}
    a\!\left(
        \mathbf{f}_i,
        \delta(\bfc_i,\bfc_j) 
    \right) \psi(\bff_j),
\label{eq:fpt_raw_attn}
\end{equation}
where $\mathbf{f}'_i$ is the output feature, $a\!\left(\mathbf{f}_i,\delta(\bfc_i,\bfc_j)\right)$ is the attention weight between voxel $i$ and $j$, computed based on feature $\bff_i$ and positional encoding $\delta(\bfc_i, \bfc_j)$, and $\psi$ is the value projection layer. 

\myParagraph{Space complexity reduction} To reduce space complexity, they decompose the relative centroid positions as:
\begin{equation}
    \bfc_i - \bfc_j = \left(\bfc_i - \bfv_i\right) - \left(\bfc_j - \bfv_j\right) + \left(\bfv_i - \bfv_j\right).
\label{eq:fpt_raw_decomposition}
\end{equation}
%
With this decomposition, the model proposes different positional embeddings $\delta_{abs}(\bfc_i - \bfc_i)$ for continuous coordinates and $\delta_{rel}(\bfv_i - \bfv_j)$ for discrete coordinates. It then computes the centroid-aware voxel feature 
\begin{equation}
    \bfg_i = \bff_i + \delta_{\mathrm{abs}} \left( \bfc_i-\bfv_i \right),
    \label{eq:fpt_original_feature}
\end{equation}
and reformulates the attention layer in \eqref{eq:fpt_raw_attn} as
\begin{equation}
    \mathbf{f}'_i
    =
    \sum_{j \in \mathcal{N}(i)}
    a\!\left(
        \mathbf{g}_i,
        \delta_{\mathrm{rel}}
        \left(
            \mathbf{v}_i-\mathbf{v}_j
        \right)
    \right)
    \psi(\mathbf{g}_j).
    \label{eq:fpt_decomp_attn}
\end{equation}
%
The different positional embeddings $\delta$, $\delta_{abs}$ and $\delta_{rel}$ are implemented with MLPs. \eqref{eq:fpt_decomp_attn} is an approximate of the initial attention formulation \eqref{eq:fpt_raw_attn}. Please refer to \citet[Sec. 3.3]{park2022fast} for why this reduces space complexity.

\myParagraph{SDF-guided attention} With the preliminary introduced above, we now design SDF-guided attention. Starting with the center positional embedding before coordinate decomposition, \ie, $\delta(\bfc_i, \bfc_j)$ in \eqref{eq:fpt_raw_attn}, since the grid coordinates distribute evenly in the space, the term $\bfc_i - \bfc_j$ will have uniform intervals, thus losing fine-grained structure information. We mitigate this by using the SDF features with the intuition that the SDF features can indicate the surface projection point of a grid coordinate. 
Denoting the vector that projects a grid coordinate $\bfc_i$ to the nearest surface by $\bft_i$, we can compute the point on the surface as:
\begin{equation}
    \bfc_i^{\mathrm{proj}} = \bfc_i + \bft_i. 
\end{equation}
Replacing $\bfc$ with $\bfc^{\text{proj}}$ in \eqref{eq:fpt_raw_decomposition}, we get
\begin{equation}
    \bfc_i^{\mathrm{proj}} \!-\! \bfc_j^{\mathrm{proj}} \!=\! ( \bfc_i \!-\! \bfv_i + \bft_i) \!-\! (\bfc_j \!-\! \bfv_j  + \bft_j) \!+\! \left(\bfv_i \!-\! \bfv_j\right)\!.
    \notag
\end{equation}
Equation \eqref{eq:fpt_original_feature} then becomes
\begin{equation}
    \bfg_i = \bff_i + \delta_{\mathrm{abs}} \left( \bfc_i-\bfv_i + \bft_i \right). 
    \label{eq:fpt_new_feature}
\end{equation}
However, the projection vector $\bft_i$ is unknown. Given that the SDF features provide information about distance, we approximate $\bft_i \approx \text{MLP}_{1}(\bfd_i)$ with the corresponding SDF feature $\bfd_i$ of grid $\bfc_i$ using an MLP. Thus, \eqref{eq:fpt_new_feature} can be approximated as
%
\begin{align}
    \bfg_i &= \bff_i + \delta_{\mathrm{abs}} \left( \bfc_i-\bfv_i + \bft_i \right) \notag \\ 
    &\approx \bff_i + \delta_{\mathrm{abs}} \left( \bfc_i-\bfv_i \right) + \text{MLP}_2 (\bft_i) \notag \\
    &\approx \bff_i + \delta_{\mathrm{abs}} \left( \bfc_i-\bfv_i \right) + \text{MLP}_2 \left(\text{MLP}_1(\bfd_i) \right) \notag \\ 
    &= \bff_i + \delta_{\mathrm{abs}} \left( \bfc_i-\bfv_i \right) + \delta_{\mathrm{sdf}} \left(\bfd_i \right),
\end{align}
where $\delta_{\mathrm{sdf}} (\cdot)$ is the combination of $\text{MLP}_{2} \left(\text{MLP}_{1}(\cdot) \right)$. See Figure~\ref{fig:sdf_guided_encoding} for an illustration of SDF-guided encoding. For the decoder, we follow the structure in Locate-3D~\citep{mcvay2025locate} and add a similar SDF embedding to \eqref{eq:fpt_new_feature} in the positional encoding (pe) of the point coordinates,
\begin{equation}
    \text{pe}_i = \text{MLP}(\bfc_i) + \text{MLP}(\bfd_i),
\end{equation}
where $\bfd_i$ is the associated SDF features. We use the fine-level grid as the input to our neural network.

We use the grid size and latent feature dimension setting introduced in \Cref{sec:grid_szie_and_feat_dim}. Following \citet{mcvay2025locate}, we map DINO features~\citep{oquab2024dinov2} in addition to CLIP~\citep{clip}. To train our neural network, we use the same loss functions as \citet{mcvay2025locate}. Our model outputs $M=256$ object proposals. For each proposal, we predict the confidence of target word positions in the sentence in the form of one-hot vector, object bounding box in the form of $(x_{\text{min}}, y_{\text{min}}, z_{\text{min}}, x_{\text{max}}, y_{\text{max}}, z_{\text{max}})$, and object grid coordinate mask, leading to output tensors $P_{\text{target}} \in \bbR^{M\times L_{\text{max}}}$, $P_{\text{box}}=\bbR^{M \times 6}$, and $P_{\text{mask}}=\bbR^{M\times G}$, where $L_{\text{max}}$ is the max length of the query text and $G$ is the number of cells. We set the mask value to $1$ for a cell if it contains ground-truth object points. To compute the loss, the predicted objects and ground-truth objects are first matched using the Hungarian algorithm \citep{kuhn1955hungarian} with distance composed of the product of predicted target world confidence and ground-truth words referring to the target object (binary masks) plus the L1 norm and generalized IoU (Intersection over Union) \citep{rezatofighi2019generalized} between predicted objects and ground-truth objects. Then, we compute the following loss functions: 1) focal loss \citep{lin2017focal} for target word prediction; 2) $L_1$ loss and generalized IoU loss for object bounding-box prediction; 3) cross-entropy and dice loss \citep{cheng2021per} for grid mask prediction. More details of our object grounding model training can be found in \Cref{sec:object_grounding_model_details}.

\subsubsection{Training-free inference} 
The feature query~\eqref{eq:neural_field_def} and decoding~\eqref{eq:multiresolution_grid} enable our grid map to be used by existing object grounding models that take in featurized point clouds, as long as we store the features needed by the object grounding model. For example, Locate-3D~\citep{mcvay2025locate} takes in a point cloud with per-point CLIP~\citep{clip} and DINO~\citep{oquab2024dinov2} features, and predicts the 3D object bounding box given a query text. The query points can be obtained from depth images / LiDAR scans. In that case, our grid map serves as a memory-efficient representation. Once we map a scene, the map can be used by existing object grounding models in a training-free manner. See \Cref{fig:training_free_inference} for an illustration. 
We conduct experiments with Locate-3D~\citep{mcvay2025locate} and demonstrate that we can achieve substantial memory compression with small accuracy loss in Section~\ref{sec:scene_understanding_exp}.

\subsection{Extension to SLAM}
\label{sec:SLAM_extension}

In this section, we extend our feature mapping approach to a SLAM method by introducing camera pose optimization and hierarchical alignment of multiple submaps.

\subsubsection{Camera pose optimization}
\label{sec:pose_optimizaion}
In the SLAM setting, we rewrite \eqref{eq:local_mapping} by treating the camera poses as optimization variables and introducing a pose regularization term:
\begin{equation}
\label{eq:local_slam}
    \underset{F, \{T_k^s\}_k \subset \SE(3)}{\min}
	 \!
	 \sum_{k=1}^n \! \sum_{j=1}^{m_k} \!
        c_j\bigl(
            h(T^s_k \bfx^k_j; F)
        \bigr) \!\! + \!\!
         \sum_{k=1}^n \! \rho(\That^s_k, T^s_k),
\end{equation}
where $c_j$ is a cost function as \eqref{eq:local_mapping} and $\rho: \SE(3) \times \SE(3) \to \Real $ is the pose regularization term. 
We use regularization inspired by trust-region methods \citep[Ch.~4]{nocedal1999numerical},
\begin{equation}
\rho(\That, T) = w^\rho \max \bigl( \bigl \| \Log(\That^{-1}T) \bigr \|_2 - \tau, 0 \bigr ),
\label{eq:trust_region}
\end{equation}
which penalizes pose updates larger than the trust-region radius $\tau$, and $w^\rho$ is a weight parameter.

The formulation in \eqref{eq:local_slam} can be performed in an incremental manner.
At each time step $k$, the received observation, \eg, depth image and 2D vision-language features, introduces new cost terms $\{c_j\}_{j=1}^{m_k}$ in \eqref{eq:local_slam}.
We then alternate between tracking and mapping to update the estimated robot pose and the neural implicit map.
During tracking, we only optimize the current robot pose $T^s_k$ and keep the submap features $F$ fixed.
We estimate the robot motion by minimizing \eqref{eq:local_slam} with respect to $T^s_k$.
During mapping, we fix all pose estimates $T_{1:k}^s$ and only optimize the submap features $F$.
In the incremental setting, the encoder initialization is disabled. Instead, we use the latest frame and randomly sampled historical historical frames to update the submap features $F$ by minimizing \eqref{eq:local_slam}.
In our implementation, we approximate each pose variable locally as $T^s_k =  \That^s_k \Exp(\varepsilon^s_k)$ where $\varepsilon^s_k \in \Real^6$ is a local pose correction on the Lie algebra of SE(3).
The grid features $F$ and the correction terms $\{\varepsilon^s_k\}_k $ are jointly optimized using Adam \citep{kingma2014adam} in PyTorch \citep{paszke2019pytorch}.

\subsubsection{Global submap alignment and fusion}
\label{sec:global_optimization}

As the robot navigates in a large environment or for an extended time, its onboard pose estimation will inevitably drift.
To achieve globally consistent mapping, it is necessary to accurately align and fuse the local submaps with respect to the global frame.
State-of-the-art systems, such as MIPS-Fusion \citep{tang2023mips} and Vox-Fusion++ \citep{zhai2024vox}, align submaps using their learned SDF values. However, this is computationally expensive and susceptible to noise.
In this section, we develop a hierarchical method for submap alignment and fusion, which attains significant speed-up and accuracy improvements by performing optimization directly using the features from the hierarchical submap grids.

\begin{figure}[t]
    \centering
    \includegraphics[width=0.9\linewidth]{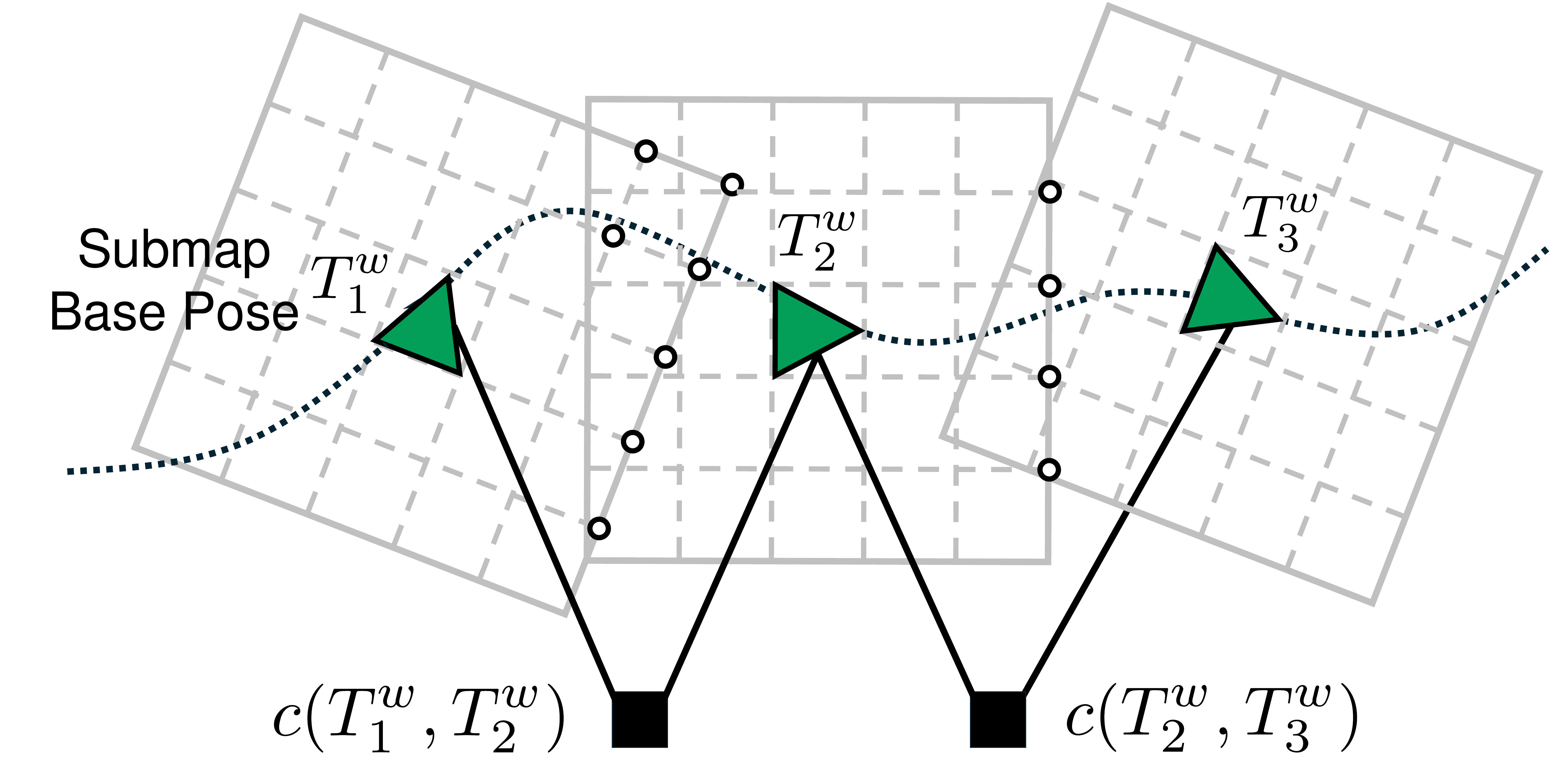}
    \caption{\textbf{Illustration of hierarchical submap alignment.} Given locally optimized submaps, \AlgName performs global alignment and fusion across submaps
    to eliminate estimation drift and achieve globally consistent neural fields (Section \ref{sec:global_optimization}). The cost function $c(T^{w}_i, T^{w}_j)$ can be either the distance between latent features \eqref{eq:latent_pgo} or the distance between predictions \eqref{eq:pred_pgo}.}
    \label{fig:hierarchical_align}
\end{figure}

\myParagraph{Hierarchical Submap Alignment}
Consider the problem of aligning a collection of $n_s$ submaps, each represented as a hierarchical feature grid from \Cref{sec:local_mapping}.
For each submap $u \in [n_s]$, we aim to optimize the submap base pose in the world frame, denoted as $T^w_u \in \SE(3)$.
The key intuition for our approach is that, for any pair of submaps to be well-aligned, their implicit feature fields should also be aligned.
We formulate a \textit{hierarchical} and \textit{correspondence-free} approach to exploit this intuition.

Our method performs alignment by progressively including features at finer levels.
Let $\bff^u_l(\bfx) \in \Real^d$ denote the level-$l$ interpolated feature at query position $\bfx$ in submap $u$.
Furthermore, let $f^u_{1:l}(\bfx)$ denote the result after concatenating features up to and including level $l$, \ie, $f^u_{1:l}(\bfx) = \bigoplus_{l'=1}^{l} f^u_{l'}(\bfx) \in \Real^{ld}$. Consider a pair of overlapping submaps $u,v \in [n_s]$.
Let $\{\bfz^u_{l,i}\} \subset \Real^3$ denote the vertex positions at level $l$ in submap $u$.
Using these vertices, we define the following pairwise cost to align features,
\begin{equation}
\cfeat_l(T^w_u\!, T^w_v) \!\! = \!\!\!\!\! \sum_{i \in I^{uv}_l} \!\!\!
d \bigl(
	f^u_{1:l}(\bfz^u_{l,i}), f^v_{1:l}((T^w_v)^{\!-1} \!(T^w_u) \bfz^u_{l,i})
\bigr).
\label{eq:latent_pairwise_cost}
\end{equation}
In \eqref{eq:latent_pairwise_cost}, $I^{uv}_l$ denotes the indices of level-$l$ vertices in submap $u$ that lie within the overlapping region of the two submaps. See \Cref{fig:hierarchical_align} for an illustration.
Intuitively, the right-hand side of \eqref{eq:latent_pairwise_cost} compares feature vectors interpolated from the two submaps.
The first feature comes from the source grid $u$ evaluated at its grid vertex position $\bfz^u_{l,i}$.
To evaluate the corresponding feature in the target grid $v$, we use the submap base poses to transform the vertex position, \ie, $\bfz^v_{l,i} = (T^w_v)^{-1} (T^w_u) \bfz^u_{l,i}$ before querying the target feature grid.
Finally, $d$ denotes a distance metric in the space of implicit features.
In our implementation, we use the L2 distance, $d(f, f') = \norm{f - f'}^2_2$, for SDF features and cosine similarity distance, $d(f, f') = 1 - 
\text{cos}(f, f')$ for vision-language features.
Given the pairwise alignment costs defined in \eqref{eq:latent_pairwise_cost}, 
\AlgName performs joint submap alignment by formulating and solving a problem similar to pose graph optimization.
Let $\Ecal$ denote the set of submap pairs with overlapping regions.
Then, we jointly optimize all submap poses $\{T^w_u\}_u \subset \SE(3)$
as follows.

\begin{problem}[Level-$l$ submap alignment]
\label{prob:feature_alignment}
Given $n_s$ submaps with current base pose estimates $\{\That^w_u\}_u$, 
solve for updated submap base poses via,
\begin{equation}
\underset{\{T^w_u\}_u \subset \SE(3)}{\min}
	\sum_{(u,v) \in \Ecal} \cfeat_l(T^w_u, T^w_v)
	+ 
	\sum_{u=1}^{n_s} \rho(\That^w_u, T^w_u),
\label{eq:latent_pgo}
\end{equation}
where $\rho$ is the trust-region regularization defined in \eqref{eq:trust_region}.
\end{problem}

\begin{algorithm}[t]
\caption{\small \textsc{Hierarchical Submap Alignment}}
\label{alg:hier_alignment}
\begin{algorithmic}[1]
    \small 
    \Function{$\{T^w_u\}_u$ = SubmapAlignment}{}
    \State Initialize submap poses $\{T^w_u\}_u$.
    \For{level $l = 1, 2, \hdots, L$} \label{alg:hier_align:feat_start}
        \State Update $\{T^w_u\}_u$ by solving \eqref{eq:latent_pgo}
        at level $l$ for $k_{f,l}$ iters.
    \EndFor \label{alg:hier_align:feat_end}
    \State Update $\{T^w_u\}_u$ by solving \eqref{eq:pred_pgo} for $k_s$ iters. \label{alg:hier_align:sdf}
    \State \Return $\{T^w_u\}_u$.
    \EndFunction
\end{algorithmic}
\end{algorithm}

Similar to local SLAM,  we solve \eqref{eq:latent_pgo} using PyTorch \citep{paszke2019pytorch} where the poses are updated by optimizing local corrections (represented in exponential coordinates) to the initial pose estimates $\{\That^w_u\}_u$. 
Our formulation leads to a sequence of alignment problems that include features at increasingly fine levels.
We solve these problems sequentially, using solutions from level $l$ as the initialization for level $l+1$; see lines~\ref{alg:hier_align:feat_start}-\ref{alg:hier_align:feat_end} in \Cref{alg:hier_alignment}.

To further enhance accuracy, the submap pose estimates may be finetuned during a final alignment stage using predicted SDF values or vision-language features. Since only a few iterations are needed in typical scenarios, this approach preserves computation efficiency compared to other methods that directly use SDF for alignment. We define the following pairwise alignment cost for submap pair $(u,v)$,
\begin{equation}
	c^{\text{pred}}(T^w_u\!, T^w_v) \! = \!\!\!\!
	\sum_{j \in J^{uv}} \!\! d\bigl(
	h^u (\bfx^u_j; F^u), h^v( (T^w_v)^{-1} T^w_u \bfx^u_j; F^v)
	\bigr)\!.
	\label{eq:sdf_pairwise_cost}
\end{equation}
In \eqref{eq:sdf_pairwise_cost}, $J^{uv}$ contains the indices of observed points that are in the intersection region of the two submaps.
For each observation $j$, $\bfx^u_j \in \Real^3$ denotes its position in the frame of submap $u$.
Compared to \eqref{eq:latent_pairwise_cost}, in \eqref{eq:sdf_pairwise_cost} we minimize the difference of the final predictions from both submaps. 
Same as above, $d(f, f') = \norm{f - f'}^2_2$ for SDF features and $d(f, f') = 1 - \text{cos}(f, f')$ for vision-language features.
Using this in the pose-graph formulation leads to a prediction-based submap alignment.
\begin{equation}
\underset{\{T^w_u\}_u \subset \SE(3)}{\min}
	  \sum_{(u,v) \in \Ecal} c^{\text{pred}}(T^w_u, T^w_v)
	+ 
	\sum_{u=1}^{n_s} \rho(\That^w_u, T^w_u),
\label{eq:pred_pgo}
\end{equation}
which follows the same format as \eqref{eq:latent_pgo} but changes the distance between latent features to the distance between predictions.
See Algorithm \ref{alg:hier_alignment} for the summary of submap alignment.
In Section \ref{sec:experiments:ablations}, we demonstrate that the combination of feature-based and prediction-based submap alignment yields the best performance in both robustness and computational efficiency.

\myParagraph{Submap Fusion}
So far, we addressed the problem of aligning submaps in the global frame to reduce estimation drift. In some applications, there is an additional need to extract a global representation (\eg, a consistent neural field or mesh) of the entire environment from the collection of local submaps. 
In \AlgName, we achieve this by using the average feature from all submaps to decode the global scene.
For any submap $u$, let $f^u(\bfx^u)$ denote the output of its multiresolution feature field evaluated at a position $\bfx^u \in \Real^3$ in the submap frame.
Given any query coordinate in the world frame $\bfx^w \in \Real^3$, we first compute the weighted average of all submap features,
\begin{equation}
f^w\!(\bfx^w) \!\!=\!\! 
\bigl (\sum_{u=1}^{n_s}\! w_u \!(\bfx^w) \bigr)^{\!-1}\!
\sum_{u=1}^{n_s} \!w_u \!(\bfx^w) f^u( (T^w_u)^{\!-1} \bfx^w),
\end{equation}
where each submap is associated with a binary weight $w_u (\bfx^w)$ computed using its bounding box,
\begin{equation*}
w_u (\bfx^w) = \begin{cases}
1 & \text{if $\bfx^w$ is inside submap $u$},\\
0 & \text{otherwise.}
\end{cases}
\end{equation*}
The final prediction is obtained by passing the average feature to the decoder network,
\begin{equation}
h^w(\bfx^w) = D_\theta (f^w(\bfx^w)).
\label{eq:fused_prediction}
\end{equation}
In summary, the proposed scheme achieves submap fusion via an averaging operation in the implicit feature space and it applies to different types of features.

%% file: sections/experiments.tex
\section{Evaluation}

In this section, we evaluate our method for scene construction, scene understanding, and pose estimation. For each part of the evaluation, we first introduce the experiment setup, including the dataset, baselines, and metrics. Then, we show both quantitative and qualitative results to demonstrate the effectiveness of our method.



\input{sections/exp_scene_construction}
\input{sections/exp_scene_understanding}
\input{sections/exp_pose_estimation}

\input{sections/exp_outdoor}

\input{sections/exp_ablation_studies}

%% file: sections/exp_scene_construction.tex
\subsection{Scene construction}

\begin{table*}[t]
\centering
\caption{Evaluation of local mapping quality for different methods on ScanNet \citep{dai2017scannet}.
\AlgName is optimized for 20 epochs and the baselines \iSDF and \Point are optimized for 100 epochs.
For each scene, we report optimization time (sec), Chamfer-L1 error (cm), and F-score (\%) computed using a threshold of $5$ cm. The best and second-best results are highlighted in \textbf{bold} and \underline{underline}, respectively.}
\label{tab:scannet_mae}
\resizebox{\textwidth}{!}{%
\begin{tabular}{|c|ccc|ccc|ccc|ccc|}
\hline
Scene
& \multicolumn{3}{c|}{0000}
& \multicolumn{3}{c|}{0011}
& \multicolumn{3}{c|}{0024}
& \multicolumn{3}{c|}{0207} \\

Method
& Time$\downarrow$ & C-l1$\downarrow$ & F-score$\uparrow$
& Time$\downarrow$ & C-l1$\downarrow$ & F-score$\uparrow$
& Time$\downarrow$ & C-l1$\downarrow$ & F-score$\uparrow$
& Time$\downarrow$ & C-l1$\downarrow$ & F-score$\uparrow$ \\
\hline

iSDF \citep{ortiz2022isdf}
& 67.71 & 4.77 & 77.87
& 25.89 & \underline{6.35} & \underline{67.32}
& 39.80 & \textbf{4.98} & \underline{73.94}
& 26.17 & \underline{7.32} & \underline{59.65} \\

Neural Points \citep{pan2024pin}
& \underline{57.87} & 12.23 & 40.89
& \underline{22.41} & 9.62 & 49.87
& \underline{32.84} & 10.63 & 46.34
& \underline{21.29} & 10.20 & 44.26 \\


\AlgName
& \textbf{1.58} & \textbf{4.04} & \textbf{83.32}
& \textbf{0.82} & \textbf{5.48} & \textbf{73.76}
& \textbf{0.97} & \underline{5.03} & \textbf{76.59}
& \textbf{0.74} & \textbf{5.91} & \textbf{70.45} \\


\hline
\end{tabular}%
}
\end{table*}

We first focus on the geometric part, \ie, SDF estimation, and evaluate \AlgName on the task of scene construction.

\myParagraph{Experiment setup} We use ScanNet \citep{dai2017scannet} as the main dataset, which features a collection of real-world RGB-D sequences with accurate camera poses and 3D reconstructions. We construct the SDF given depth image sequences and extract the scene mesh from the estimated SDF. To evaluate the mesh construction error, we compute the Chamfer-L1 distance and F-score against the ground-truth mesh. For baseline methods, we compare \AlgName against the MLP-based representation from iSDF \citep{ortiz2022isdf} and the neural-point-based representation from PIN-SLAM \citep{pan2024pin}, using default parameters from their open-source code. In the following, we refer to these two baselines as \iSDF and \Point, respectively.

In this part of the experiments, we represent the entire scene as a single submap and use the default grid sizes and SDF feature dimension introduced in \Cref{sec:grid_szie_and_feat_dim}.
We run \AlgName and the baseline \iSDF and \Point methods using ground truth poses, where pose optimization is disabled. 
For each method, we report its GPU time in addition to Chamfer-L1 distance and F-score.
\AlgName is optimized for 20 epochs.
For the \iSDF and \Point baselines, since they do not have access to pre-training, we optimize both for 100 epochs for a fair comparison.

\myParagraph{Results} As shown in \cref{tab:scannet_mae}, \AlgName achieves either the best or second-best reconstruction results on all scenes.
Using ground truth poses, \AlgName achieves superior speed, requiring only $0.7$–$1.5$~sec for optimization.
We then show some qualitative results. \Cref{fig:scannet_mapping} shows a qualitative comparison of the estimated SDF at a fixed height on scene 0207. Figure~\ref{fig:scannet_mapping:init} shows the initialization produced by the learned encoder (corresponding to line~\ref{alg:hier_local_mapping:init} in \cref{alg:hier_local_mapping}), which already captures the scene geometry to a large extent.
The remaining errors and missing details are improved after running 20 optimization epochs, as shown in Figure~\ref{fig:scannet_mapping:opt}.
Because \iSDF uses a single MLP to represent the scene, its output (see Figure~\ref{fig:scannet_mapping:isdf}) is overly smooth, which leads to lower recall and F-score than \AlgName.
Lastly, the SDF prediction from \Point (\cref{fig:scannet_mapping:pin}) is especially noisy in free space, due to the lack of neural point features far away from the surface.

To provide more insight on the performance of the learned hierarchical encoders, 
Figure~\ref{fig:encoder_io} visualizes the inputs and output predictions on scene 0024.
At each level (shown as a row in the figure), we visualize two channels $r^\text{sdf}_1, r^\text{sdf}_3$ of the voxelized input residuals defined in \eqref{eq:encoder_input_sdf} and \eqref{eq:encoder_lower_bnd}, the predicted SDF, and the predicted 3D mesh.
The level 1 encoder, although with a coarse resolution of $0.5$~m, already captures the rough scene geometry and free space SDF. 
On top of this coarse prediction, the level 2 encoder is able to add fine details and recover objects such as the sofa and the table.   

\begin{figure}[!ht]
    \centering
    \begin{subfigure}[t]{0.25\linewidth}
        \centering
        \includegraphics[trim=10 0 10 0, clip, width=\linewidth]{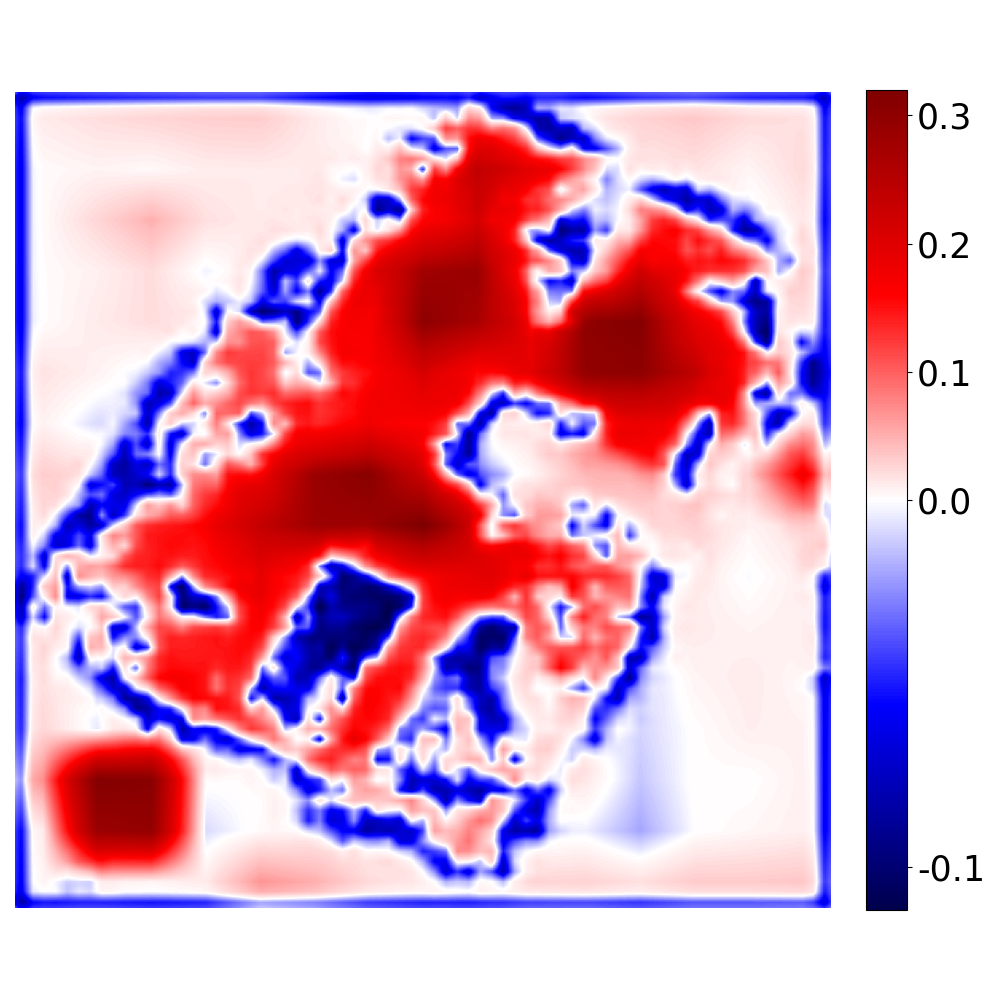}
        \caption{\AlgName (init)}
        \label{fig:scannet_mapping:init}
    \end{subfigure}%
    \begin{subfigure}[t]{0.25\linewidth}
        \centering
        \includegraphics[trim=10 0 10 0, clip, width=\linewidth]{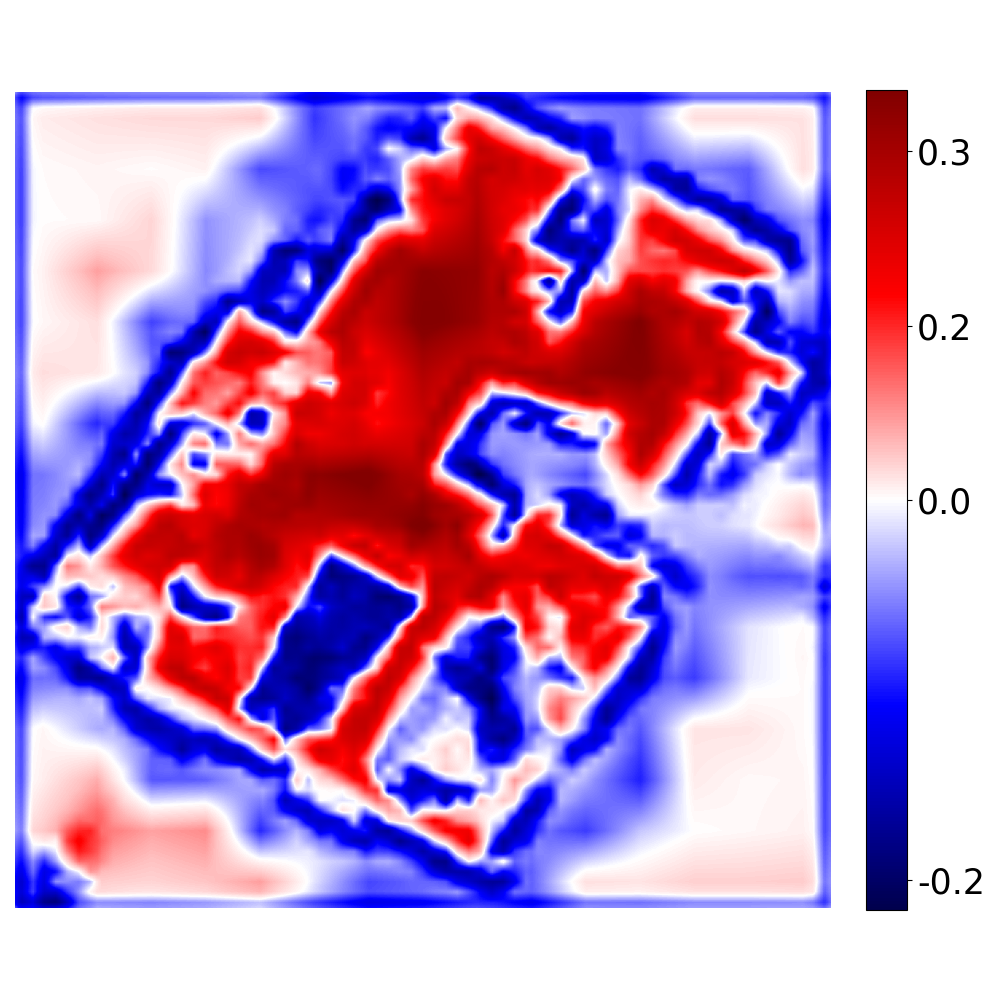}
        \caption{\AlgName (opt)}
        \label{fig:scannet_mapping:opt}
    \end{subfigure}%
    \begin{subfigure}[t]{0.25\linewidth}
        \centering
        \includegraphics[trim=10 0 10 0, clip, width=\linewidth]{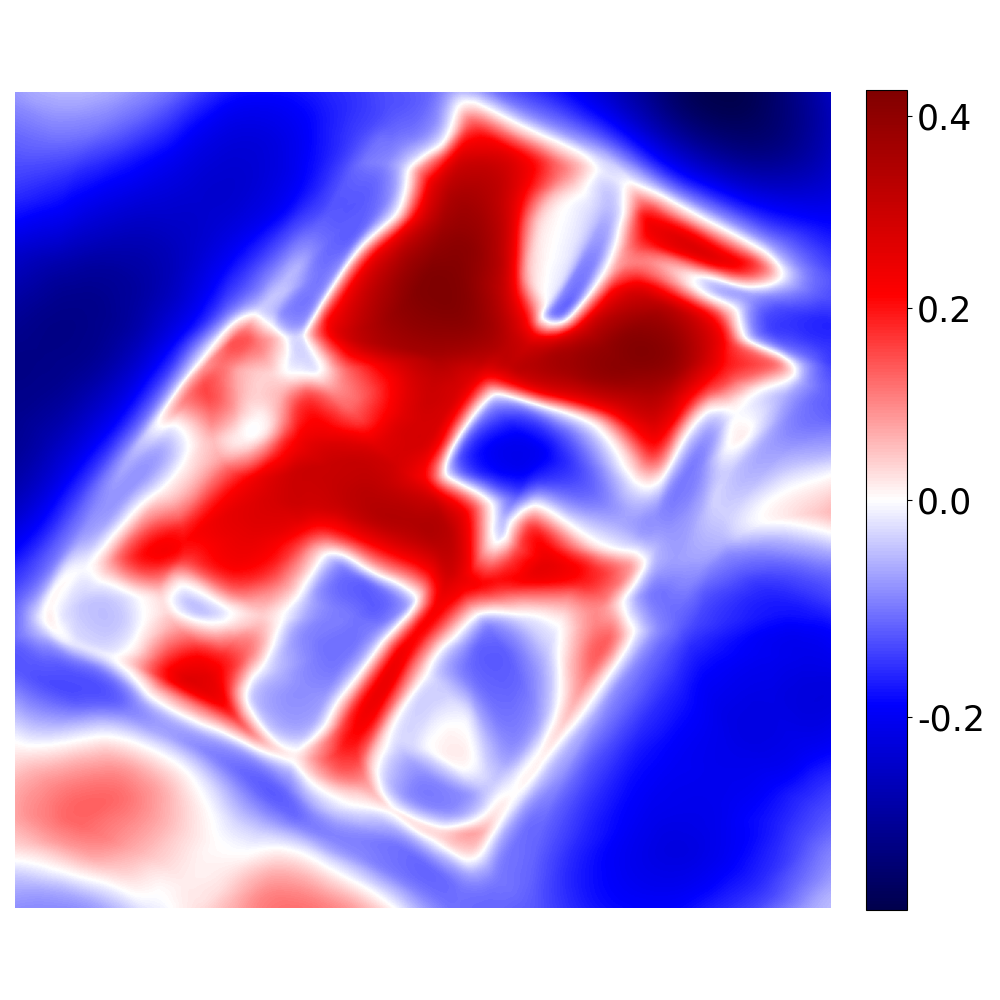}
        \caption{\iSDF}
        \label{fig:scannet_mapping:isdf}
    \end{subfigure}%
    \begin{subfigure}[t]{0.25\linewidth}
        \centering
        \includegraphics[trim=10 0 10 0, clip, width=\linewidth]{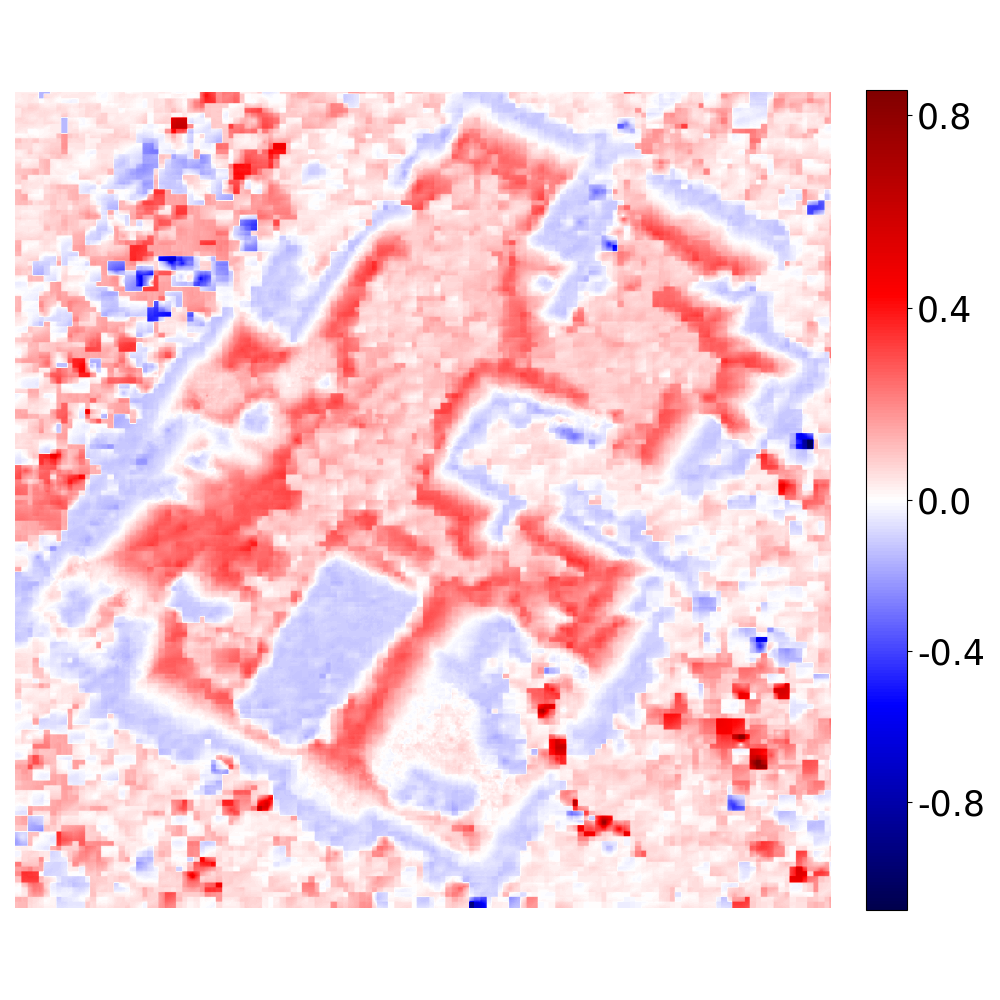}
        \caption{\Point}
        \label{fig:scannet_mapping:pin}
    \end{subfigure}
    \caption{Visualization of estimated SDF at a fixed height on {ScanNet} scene 0207. 
    }
    \label{fig:scannet_mapping}
\end{figure}

\begin{figure}[!ht]
    \centering
    \begin{subfigure}[t]{0.24\linewidth}
        \centering
        \includegraphics[trim=0 0 0 0, clip, width=\linewidth]{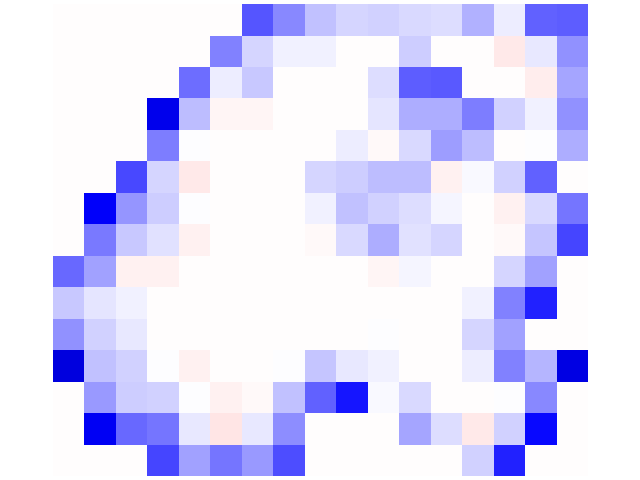}
        \caption{Level 1 $r^\text{in}_1$}
    \end{subfigure}
    \begin{subfigure}[t]{0.24\linewidth}
        \centering
        \includegraphics[trim=0 0 0 0, clip, width=\linewidth]{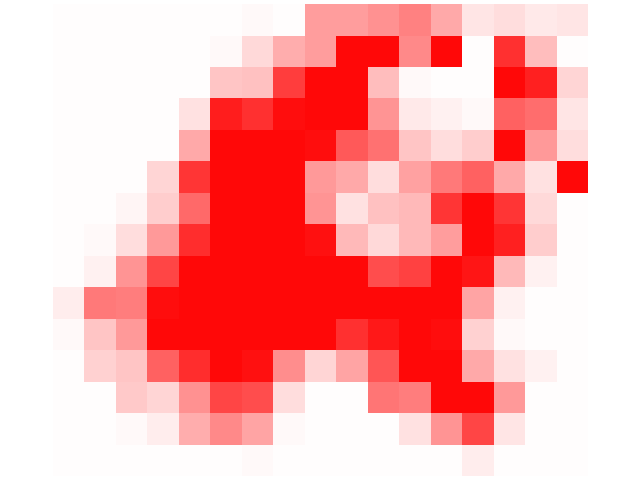}
        \caption{Level 1 $r^\text{in}_3$}
    \end{subfigure} 
    \begin{subfigure}[t]{0.22\linewidth}
        \centering
        \includegraphics[trim=0 0 0 0, clip, width=\linewidth]{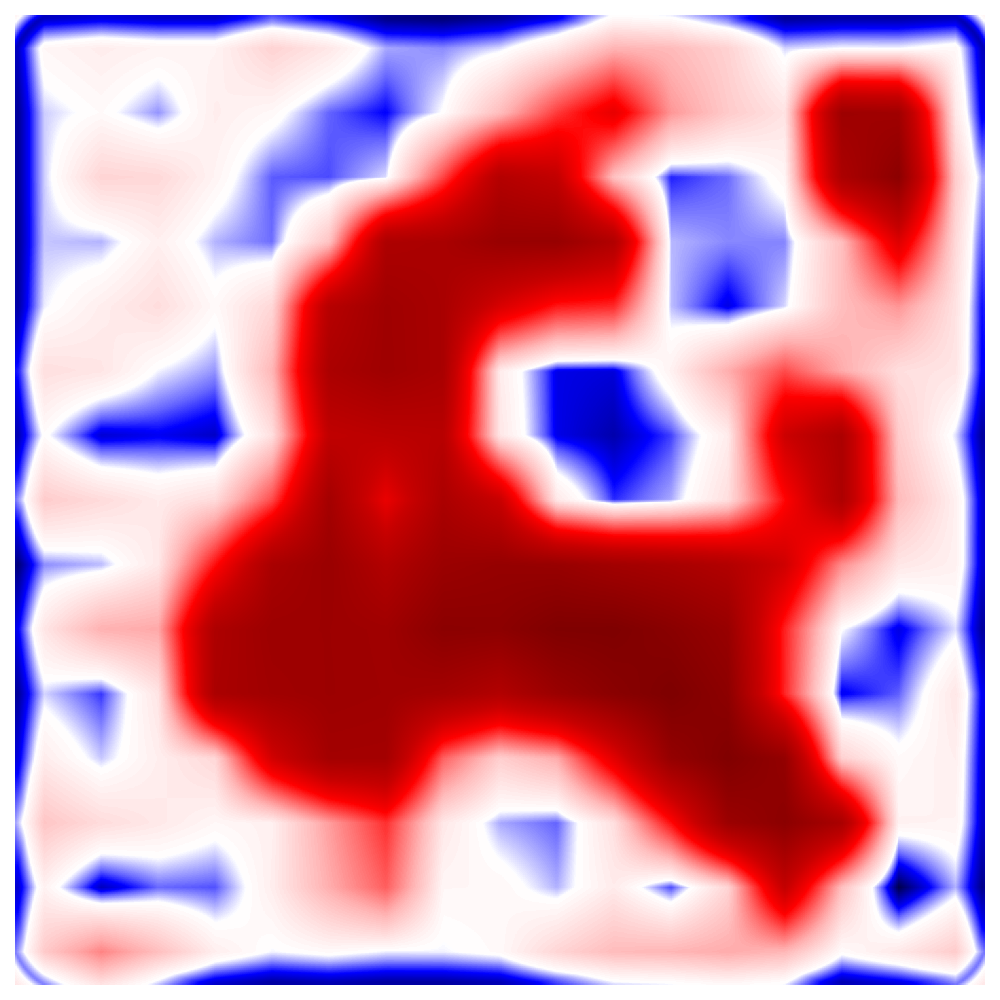}
        \caption{Level 1 SDF}
    \end{subfigure}
    \begin{subfigure}[t]{0.24\linewidth}
        \centering
        \includegraphics[trim=400 0 400 0, clip, width=\linewidth]{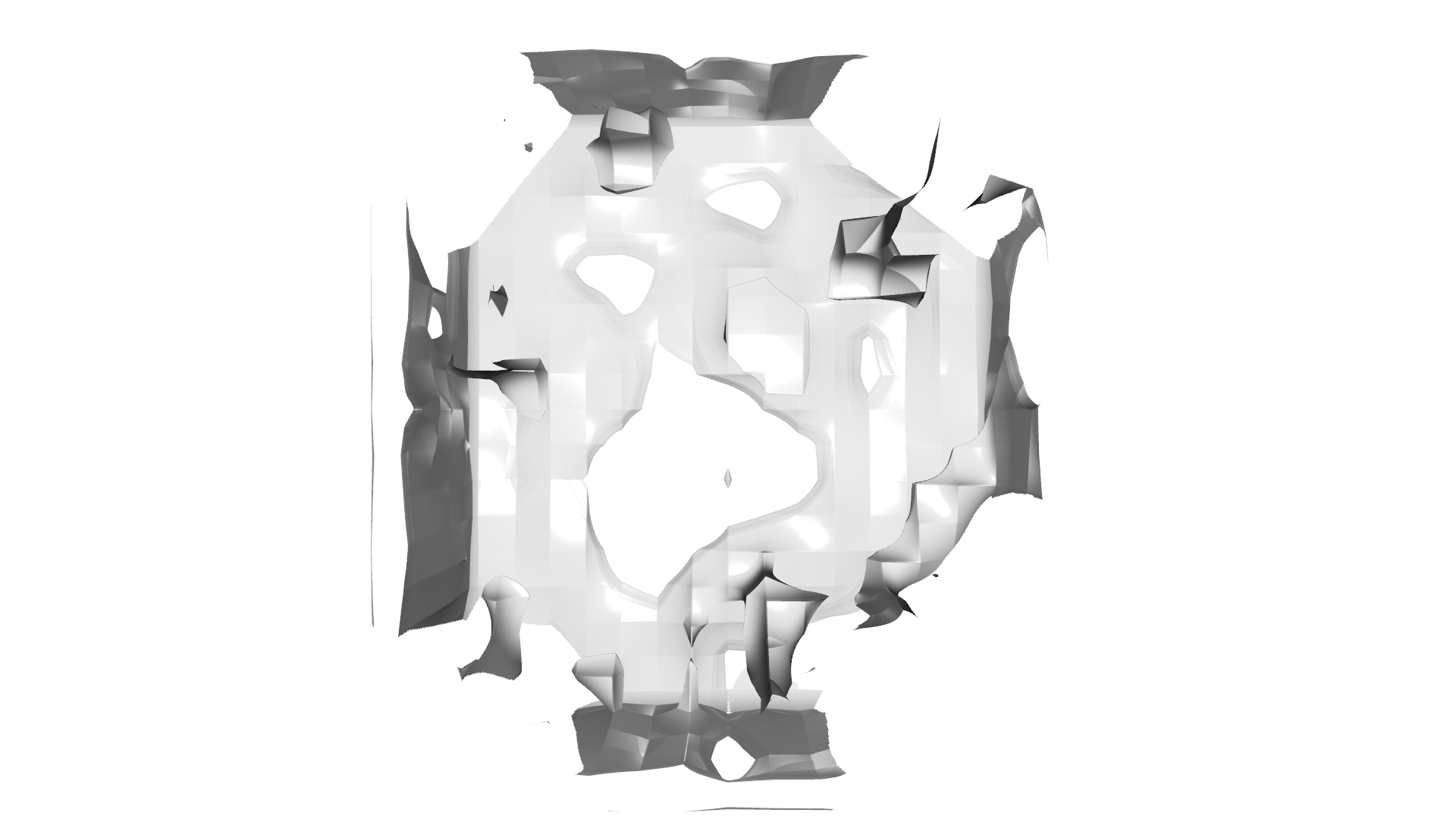}
        \caption{Level 1  mesh}
    \end{subfigure}
    \\
    \begin{subfigure}[t]{0.24\linewidth}
        \centering
        \includegraphics[trim=0 0 0 0, clip, width=\linewidth]{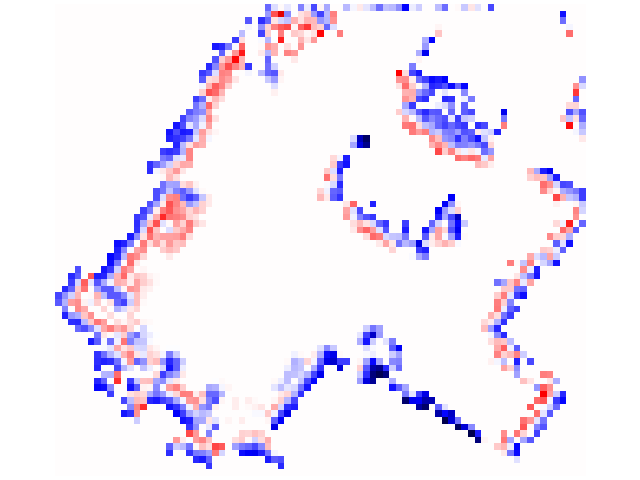}
        \caption{Level 2 $r^\text{in}_1$}
    \end{subfigure}
    \begin{subfigure}[t]{0.24\linewidth}
        \centering
        \includegraphics[trim=0 0 0 0, clip, width=\linewidth]{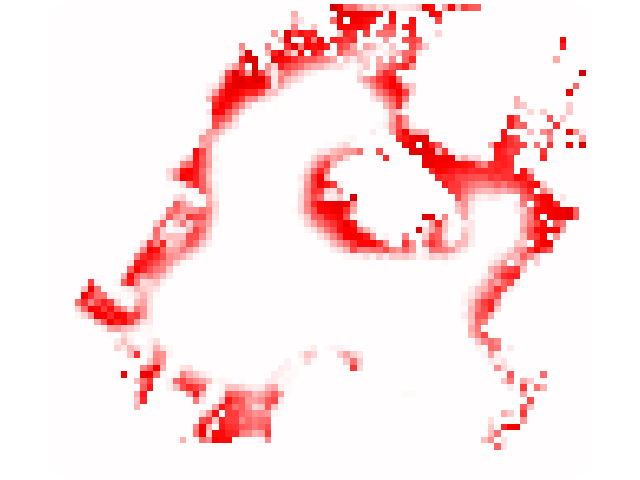}
        \caption{Level 2 $r^\text{in}_3$}
    \end{subfigure} 
    \begin{subfigure}[t]{0.22\linewidth}
        \centering
        \includegraphics[trim=0 0 0 0, clip, width=\linewidth]{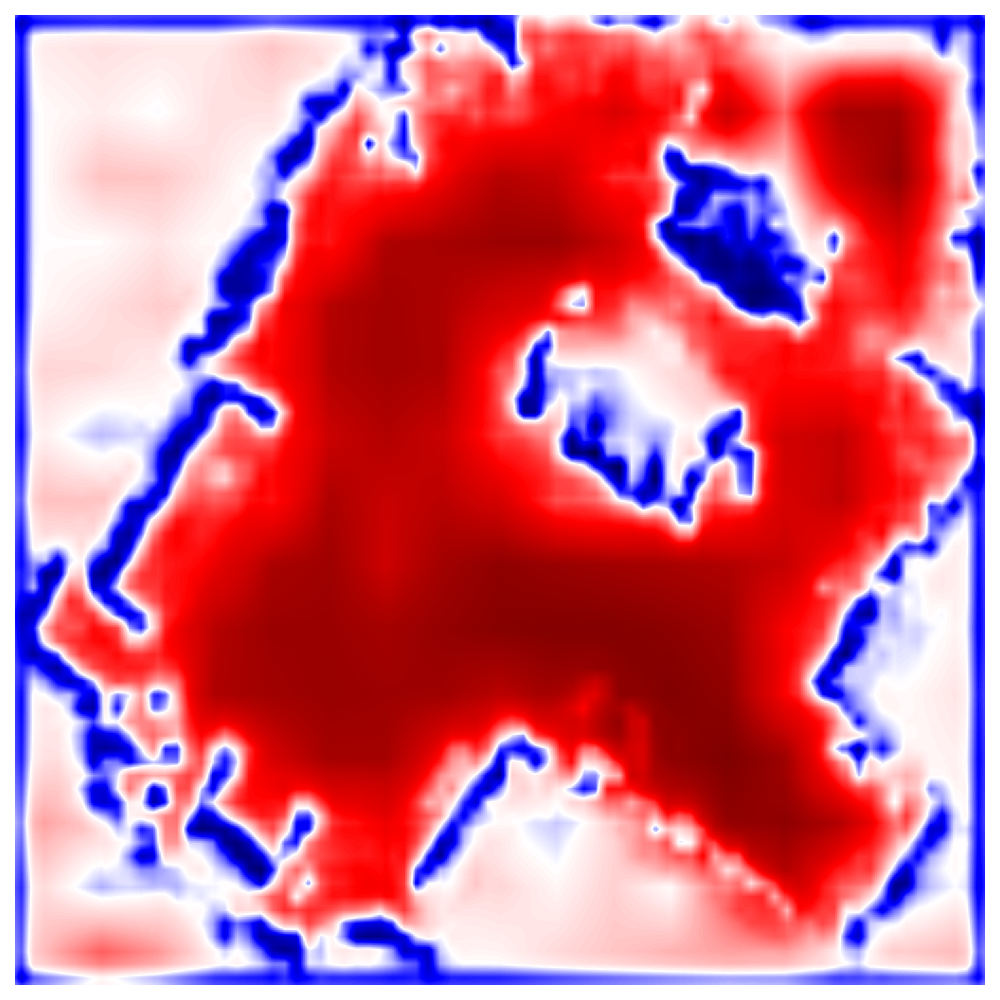}
        \caption{Level 2 SDF}
    \end{subfigure}
    \begin{subfigure}[t]{0.24\linewidth}
        \centering
        \includegraphics[trim=400 0 400 0, clip, width=\linewidth]{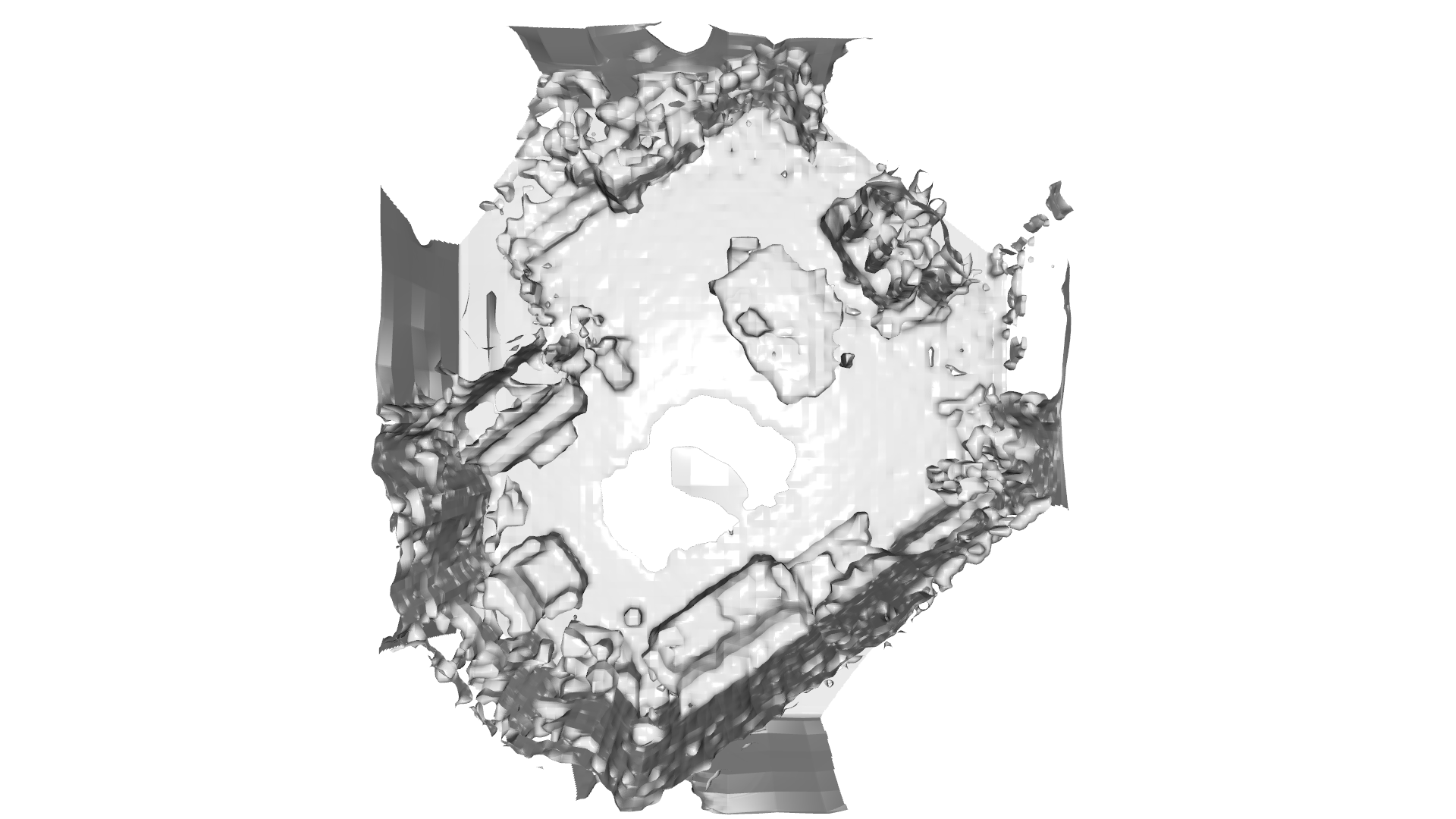}
        \caption{Level 2  mesh}
    \end{subfigure}
    \caption{Inputs and output predictions from learned hierarchical encoders on ScanNet scene 0024. Red and blue show positive and negative residual or SDF values, respectively. }
    \label{fig:encoder_io}
\end{figure}

%% file: sections/exp_scene_understanding.tex
\subsection{Scene understanding}
\label{sec:scene_understanding_exp}

\begin{figure*}[t]
    \centering
    \begin{subfigure}[t]{0.35\linewidth}
        \centering
        \includegraphics[width=\linewidth]{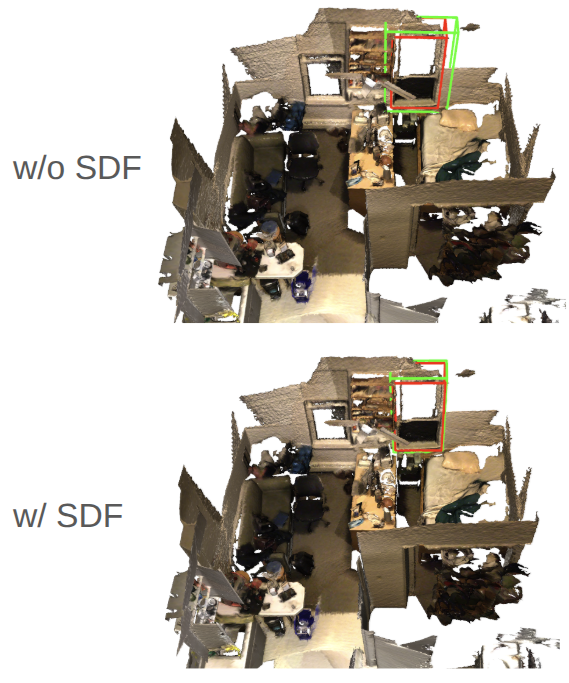}
        \caption{Query text: \textit{The window is in the back of the room behind the bed. It is to the right of the book shelf.}}
        \label{fig:scene_understanding_ScanRefer}
    \end{subfigure}
    \begin{subfigure}[t]{0.33\linewidth}
        \centering
        \includegraphics[width=\linewidth]{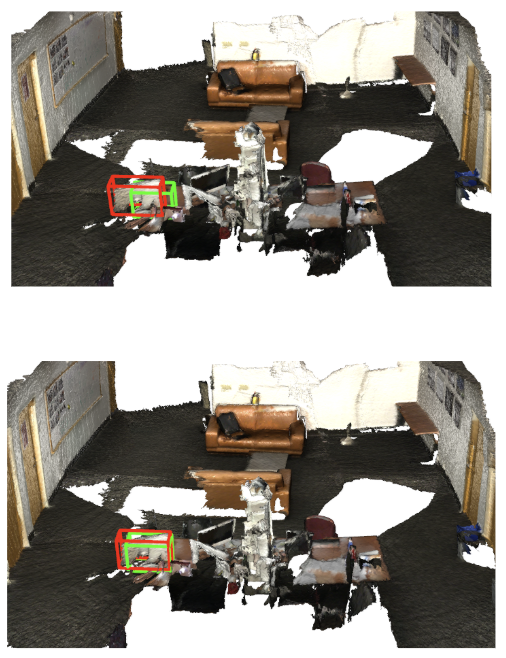}
        \caption{Query text: \textit{Find the monitor that is far from the shelf.}}
        \label{fig:scene_understanding_Sr3D}
    \end{subfigure}%
    \begin{subfigure}[t]{0.24\linewidth}
        \centering
        \includegraphics[width=\linewidth]{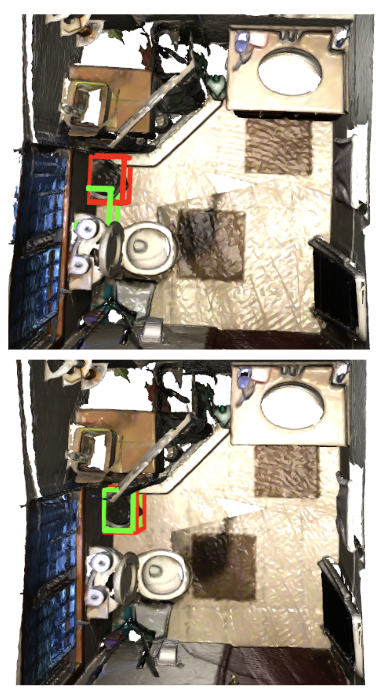}
        \caption{Query text: \textit{The trash can next to the toilet.}}
        \label{fig:scene_understanding_Nr3D}
    \end{subfigure}
    \caption{\textbf{Visualization of open-vocabulary object grounding results.}
    We visualize the results on different scenes from ScanRefer~\citep{chen2020scanrefer}, Sr3D~\citep{achlioptas2020referit3d}, and Nr3D~\citep{achlioptas2020referit3d}. {\textcolor{green}{Green boxes} are prediction} and {\textcolor{red}{red boxes} are ground truth}. Because the grid coordinates do not necessarily lie on the object surface, the model lacks detailed structural information, thus causing inaccurate object bounding box size and location. This causes accuracy drop with stricter IoU thresholds.}
    \label{fig:object_grounding_visualization}
\end{figure*}

\begin{table*}[t]
\centering
\small
\setlength{\tabcolsep}{2pt}
\renewcommand{\arraystretch}{1.08}
\caption{Comparison with prior methods on SR3D \citep{achlioptas2020referit3d}, NR3D \citep{achlioptas2020referit3d}, and ScanRefer \citep{chen2020scanrefer}. Map size denotes the memory required to store the scene representation. We show the results of our complete method in \textbf{bold}.}
\label{tab:scene_understanding}
\begin{tabular}{lccccccccc}
\toprule
& \multicolumn{2}{c}{Joint Evaluation} & \multicolumn{2}{c}{SR3D} & \multicolumn{2}{c}{NR3D} & \multicolumn{2}{c}{ScanRefer}  & \multicolumn{1}{c}{All Scenes} \\

\cmidrule(lr){2-3}
\cmidrule(lr){4-5}
\cmidrule(lr){6-7}
\cmidrule(lr){8-9}
\cmidrule(lr){10-10}

Method
& Acc@25 & Acc@50 & Acc@25 & Acc@50 & Acc@25 & Acc@50 & Acc@25 & Acc@50 & Map size (MB) $\downarrow$ \\
\midrule

\rowcolor{gray!20}
\multicolumn{10}{l}{\textit{Mesh PC}} \\

ReferIt3DNet~\citep{achlioptas2020referit3d}
& 26.6 & -- & 27.7 & -- & 24.0 & --  & 26.4 & 16.9 & -- \\
ScanRefer~\citep{chen2020scanrefer}
& -- & -- & -- & -- & -- & -- & 35.5 & 22.4 & -- \\
InstanceRefer~\citep{yuan2021instancerefer}
& 33.6 & -- & 31.5 & -- & 29.9 & -- & 40.2 & 32.9 & -- \\
BUTD-DETR~\citep{jain2022bottom}
& 50.28 & -- & 52.1 & -- & 43.3 & -- & 52.2 & 39.8 & -- \\
3D-VisTA~\citep{zhu20233d}
& 53.1 & 48.1 & 56.5 & 51.5 & 47.7 & 42.2 & 51.0 & 46.2 & -- \\

\midrule

\rowcolor{gray!20}
\multicolumn{10}{l}{\textit{Sensor PC + Proposals from Mesh PC}} \\

3D-VisTA~\citep{zhu20233d}
& 45.9 & 41.8 & 47.2 & 43.2 & 42.1 & 37.4 & 46.4 & 42.5 & -- \\
ConcreteNet~\citep{unal2024four} & -- & -- & -- & -- & -- & -- & 56.12 & 49.50 & -- \\

\midrule

\rowcolor{gray!20}
\multicolumn{10}{l}{\textit{Sensor PC}} \\

BUTD-DETR~\citep{jain2022bottom}
& 40.7 & 26.6 & 43.3 & 28.9 & 32.2 & 19.4 & 42.2 & 27.9 & -- \\
Llama VLM Baseline
& 28.8 & 18.3 & 21.3 & 13.9 & 28.0 & 16.9 & 37.3 & 24.2 & -- \\
GPT-4o VLM Baseline
& 38.6 & 25.5 & 29.2 & 18.9 & 38.2 & 25.1 & 48.2 & 32.5 & -- \\

\rowcolor{blue!13} Locate-3D w/ raw point cloud 
& 60.3 & 46.3 & 64.0 & 48.7 & 54.1 & 40.7 & 58.6 & 46.5 & 286.5 \\
\rowcolor{blue!13} Locate-3D w/ our 3D grids 
& 58.3 & 43.3 & 62.6 & 46.7 & 50.8 & 35.6 & 56.5 & 43.4 & 13.2 \\
\rowcolor{blue!13} Ours w/o SDF features 
& 49.3 & 28.4 & 51.6 & 29.9 & 41.2 & 21.7 & 51.5 & 31.0 & 12.6 \\
\rowcolor{blue!13} Ours w/ SDF features 
& \textbf{58.5} & \textbf{43.6} & \textbf{62.1} & \textbf{46.3} & \textbf{50.4} & \textbf{35.6} & \textbf{58.3} & \textbf{45.1} & \textbf{13.4} \\
\bottomrule
\end{tabular}

\end{table*}

We then evaluate the semantic understanding part and conduct experiments on the task of open-vocabulary object grounding, which requires localizing an object (in 3D bounding box) from a text description that may include a combination of attributes (e.g., “red backpack”) and/or spatial relationships (e.g., “the fridge near the brown table”). 

\myParagraph{Experiment setup} Following standard practice, we choose Sr3D~\citep{achlioptas2020referit3d}, Nr3D~\citep{achlioptas2020referit3d}, and ScanRefer~\citep{chen2020scanrefer} as our main datasets. To evaluate the object grounding accuracy, we compute the IoU between the predicted object bounding box and the ground-truth bounding box. The prediction is considered correct if the IoU is larger than or equal to a threshold and wrong if it is lower. We report the accuracy at both thresholds $0.25$ and $0.5$. Following Locate-3D~\citep{mcvay2025locate}, we report the Top-1 accuracy. Since \citet{mcvay2025locate} have not open-sourced their data and evaluation scripts, we pre-processed the data and implemented the evaluation script ourselves based on their descriptions, and used the same evaluation code for all models for a fair comparison.
We first show that our hierarchical 3D grid map is compatible with existing point-cloud-based methods by running the Locate-3D model~\citep{mcvay2025locate} with queried point features from the grid maps.  
Then we show the results of our SDF-guided object grounding model (\Cref{sec:sdf_guided_grounding}) and show the results without using the SDF features for an ablation study. 

\myParagraph{Results} The main results are shown in \Cref{tab:scene_understanding}. We can see that 
Our results show that our hierarchical 3D grid maps can achieve more than $20\times$ memory compression with only $1\% \sim 3\%$ overall accuracy loss. For the map size comparison, the reported memory of Locate-3D w/ our 3D grids includes both vision-language grid features and the query points; Ours w/ SDF features includes both SDF and vision-language features. We can see that by utilizing SDF features, the accuracy is significantly improved, especially for the accuracy with IoU threshold of $0.5$, i.e., over $15\%$. The reason is mentioned in \Cref{sec:sdf_guided_grounding}, the grid coordinates do not necessarily lie on the object surface and are distributed evenly in space, thus losing detailed structural information. The mismatch between grid coordinates and object positions causes inaccurate box size or position, which is more noticeable under a higher IoU threshold, i.e., $0.5$. Since SDF features indicate the distance to the surface, by utilizing them in our proposed way, we significantly compensate for the missing structural information and achieve similar results with running Locate-3D with our 3D grids. Some visualizations of our model without and with SDF features can be found in Figure~\ref{fig:object_grounding_visualization}.


%% file: sections/exp_pose_estimation.tex
\subsection{Pose estimation}

In this section, we evaluate the pose estimation with our hierarchical grids, including pose optimization in local submap and submap alignment. For this part of experiment, we keep using ScanNet~\citep{dai2017scannet} as our dataset.

\begin{table*}[!t]
\centering
\vspace{-0.1cm}
\caption{\edit{Comparison between local mapping and SLAM on ScanNet \citep{dai2017scannet} using colored ICP odometry as initial guess. We evaluate SLAM under both SDF-only supervision and joint SDF+CLIP supervision to assess the effect of CLIP features on SLAM accuracy. For each scene, we report translation RMSE (cm), rotation RMSE (deg), and Chamfer-L1 error (cm). Best results are highlighted in \textbf{bold}.}}
\label{tab:scannet_odometry}
\vspace{-0.1cm}
\resizebox{\textwidth}{!}{%
\begin{tabular}{|l|crr|crr|crr|crr|}
\hline
\multicolumn{1}{|c|}{Scene}  & \multicolumn{3}{c|}{0000} & \multicolumn{3}{c|}{0011} & \multicolumn{3}{c|}{0024} & \multicolumn{3}{c|}{0207} \\
\multicolumn{1}{|c|}{Method} & Tran err. $\downarrow$ & \multicolumn{1}{c}{Rot err. $\downarrow$} & \multicolumn{1}{c|}{C-l1 $\downarrow$} & Tran err. $\downarrow$ & \multicolumn{1}{c}{Rot err. $\downarrow$} & \multicolumn{1}{c|}{C-l1 $\downarrow$} & Tran err. $\downarrow$ & \multicolumn{1}{c}{Rot err. $\downarrow$} & \multicolumn{1}{c|}{C-l1 $\downarrow$} & Tran err. $\downarrow$ & \multicolumn{1}{c}{Rot err. $\downarrow$} & \multicolumn{1}{c|}{C-l1 $\downarrow$} \\ 
\hline

Color ICP~\citep{park2017colored} + Mapping & 
\multicolumn{1}{r}{40.86} & 10.17 & 15.04 & 
\multicolumn{1}{r}{17.25} & 5.35 & 8.8 &  
\multicolumn{1}{r}{18.98} & 5.04 & 10.29 &  
\multicolumn{1}{r}{19.47} & 6.25 & 12.83 \\ 

Color ICP~\citep{park2017colored} + SLAM (SDF) & 
\multicolumn{1}{r}{\textbf{17.33}} & \textbf{4.16} & 11.77 & 
\multicolumn{1}{r}{9.01} & \textbf{2.85} & 8.39 & 
\multicolumn{1}{r}{\textbf{9.12}} & \textbf{2.93} & \textbf{9.01} & 
\multicolumn{1}{r}{12.78} & \textbf{6.2} & 10.85 \\ 

Color ICP~\citep{park2017colored} + SLAM (SDF+CLIP) & 
\multicolumn{1}{r}{20.11} & 10.17 & 11.72 & 
\multicolumn{1}{r}{\textbf{8.16}} & 3.46 & \textbf{7.36} & 
\multicolumn{1}{r}{12.15} & 4.42 & 12.47 & 
\multicolumn{1}{r}{\textbf{11.14}} & 8.0 & \textbf{10.78} \\ 

\hline

\end{tabular}%
}
\vspace{-0.3cm}
\end{table*}

\myParagraph{Pose optimization in local submap experiment setup}
To evaluate joint mapping and pose optimization under more challenging initial trajectory estimates, we include an experiment with color ICP~\citep{park2017colored} as odometry estimation.
When running local SLAM, we let \AlgName incrementally process a sliding window of 10 frames with a stride of 10 and disable pose regularization in \eqref{eq:local_slam}.
We test pose optimization with both SDF features and vision-language, \ie, CLIP~\citep{clip} features. We follow the same cost functions for SDF and CLIP as introduced in \Cref{sec:local_mapping} and introduce pose as an optimizable variable as introduced in \Cref{sec:pose_optimizaion}. We follow the default grid size and feature dimension setting in \Cref{sec:grid_szie_and_feat_dim}.

\myParagraph{Pose optimization in local submap results} \Cref{tab:scannet_odometry} shows that, for SDF features, performing local SLAM substantially improves performance compared to mapping only. However, we found that CLIP features have very little effect in pose estimation, but we still included the results for completeness. Adding the CLIP features can either increase or decrease the translation errors, and tends to increase the rotation errors even in cases where translation or reconstruction quality improves. This suggests that camera pose estimation is tightly coupled with geometry estimation, while vision-language features lack geometric information and do not really contribute to camera pose tracking.

\begin{table}[t]
\centering
\renewcommand{\arraystretch}{1.3}
\caption{Submap alignment evaluation on ScanNet \citep{dai2017scannet} from initial errors of $5$~deg and $0.20$~m.
For each scene, we report the GPU time (sec), and the final submap rotation error (deg) and translation error (m) compared to ground truth. Results averaged over 10 trials. Best and second-best results are highlighted in \textbf{bold} and \underline{underline}.}
\label{tab:scannet_align}
\resizebox{\linewidth}{!}{%
\begin{tabular}{|l|rrr|rrr|}
\hline
\multicolumn{1}{|c|}{Scene} & \multicolumn{3}{c|}{0011 (159 poses, 4 submaps)} & \multicolumn{3}{c|}{0024 (227 poses, 5 submaps)} \\
\multicolumn{1}{|c|}{Methods} & \multicolumn{1}{c}{Time$\downarrow$} & \multicolumn{1}{c}{Rot err$\downarrow$} & \multicolumn{1}{c|}{Tran err$\downarrow$} & \multicolumn{1}{c}{Time$\downarrow$} & \multicolumn{1}{c}{Rot err$\downarrow$} & \multicolumn{1}{c|}{Tran err$\downarrow$} \\ \hline
MIPS \citep{tang2023mips} & 52.34 & 3.90 & 0.15 & 116.57 & 2.45 & 0.09 \\
VFPP \citep{zhai2024vox} & 80.01 & 3.21 & 0.13 & 138.03 & \textbf{1.86} & \textbf{0.07} \\
ICP \citep{choi2015robust} & - & 2.71 & 0.17 & - & 10.04 & 0.33 \\
\AlgName (SDF) & 12.15 & {\ul 1.81} & {\ul 0.07} & {\ul 19.20} & 2.25 & 0.09 \\
\AlgName (CLIP) & \textbf{6.29} & 3.66 & 0.11 & \textbf{17.52} & 3.39 & 0.13 \\
\AlgName (SDF+CLIP) & 21.22 & \textbf{1.22} & \textbf{0.05} & 32.96 & {\ul 1.98} & {\ul 0.08} \\ \hline
\end{tabular}%
}
\end{table}

\myParagraph{Submap alignment experiment setup} The next set of experiments evaluates the submap alignment and fusion approach  in \AlgName.
We use the same method in MIPS-Fusion \citep{tang2023mips} to obtain the submap splits.
We select scene 0011 and perform local SLAM within all submaps starting from noisy poses with $1$~deg and $0.1$~m errors.
We then evaluate the performance of the proposed alignment (\Cref{alg:hier_alignment}) and the baseline \edit{MIPS \citep{tang2023mips}, VFPP~\citep{zhai2024vox}, and ICP~\citep{choi2015robust}} under increasing submap alignment errors.
All methods are run for 100 iterations before evaluating their results.
For \AlgName, we allocate 45 iterations for alignment at each feature level and 10 iterations for final alignment using SDF or vision-language feature prediction, which corresponds to setting $k_{f,1}=k_{f,2}=45$ and $k_s=10$ in \cref{alg:hier_alignment}.
\edit{All neural methods} use the same trust-region-based pose regularization, where the trust-region radius $\tau$ in \eqref{eq:trust_region} is set based on the initial submap alignment error.
For each setting of the initial error shown on the $x$ axis, we perform 10 random trials and get the final results. We also use the default grid size and feature dimension setting in \Cref{sec:grid_szie_and_feat_dim}.

\myParagraph{Submap alignment results} As shown in boxplots in \Cref{fig:submap_alignment}.
With a small initial alignment error of $1$~deg and $0.1$~m, all methods produce accurate alignment results.
However, as the initial error increases, both MIPS and VFPP methods quickly fail, showing that solely relying on SDF prediction is very sensitive to the initial guess.
\edit{The ICP baseline results in higher variance indicating more frequent alignment failures despite the use of robust optimization.}
The proposed hierarchical alignment in \AlgName is more robust under more severe misalignments and outperforms the baseline methods significantly.
\edit{Since our experiments are offline, we record the total GPU time and final accuracy. 
Timing results for ICP are omitted as the method is not implemented on GPU.
\Cref{tab:scannet_align} reports the results on scenes 0011 and 0024.}
The initial submap alignment error is set to $5$~deg and $0.2$~m, and results are collected over 10 trials.
\AlgName is significantly faster since the majority of iterations in \cref{alg:hier_alignment} only uses features for alignment and does not need to decode the SDF values. In terms of accuracy, \AlgName demonstrates competitive performance and achieves either best or second-best results.
We also test submap alignment with CLIP features in both \Cref{fig:submap_alignment} and \Cref{tab:scannet_align}. Different from local pose estimation, we find that CLIP features can help improve the submap alignment, especially the rotation error. The difference from local pose optimization is that, during submap alignment, we only optimize the submaps' poses and keep submap features frozen, thus avoiding the complex dependency between pose tracking and latent feature mapping. 

\begin{figure}[!ht]
    \centering
    \begin{subfigure}[b]{\linewidth}
        \centering
        \includegraphics[width=1.0\linewidth]{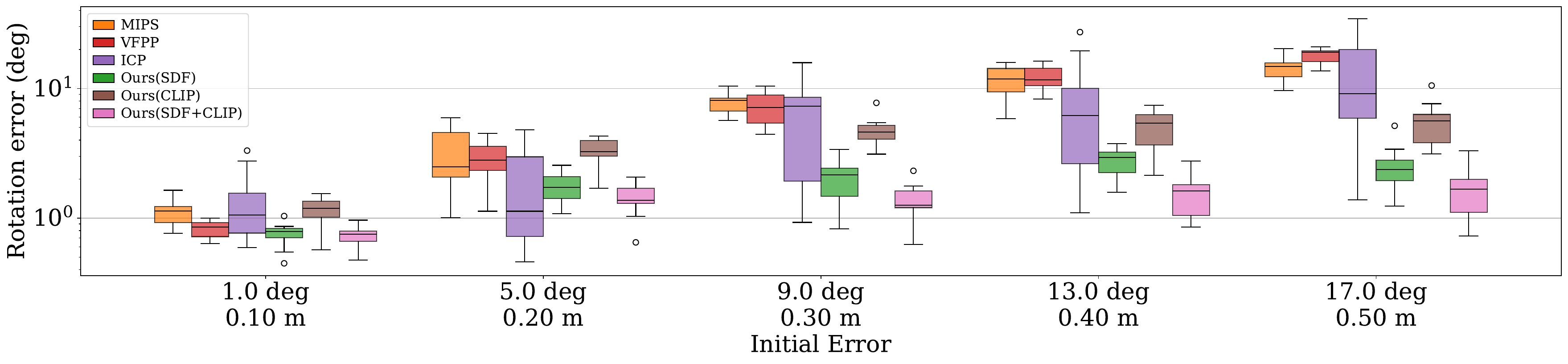}
        \captionsetup{justification=centering}
        \caption{Rotation error (deg)}
        \label{fig:sub_align_error_deg}
    \end{subfigure}\\
    \begin{subfigure}[b]{\linewidth}
        \centering
        \includegraphics[width=1.0\linewidth]{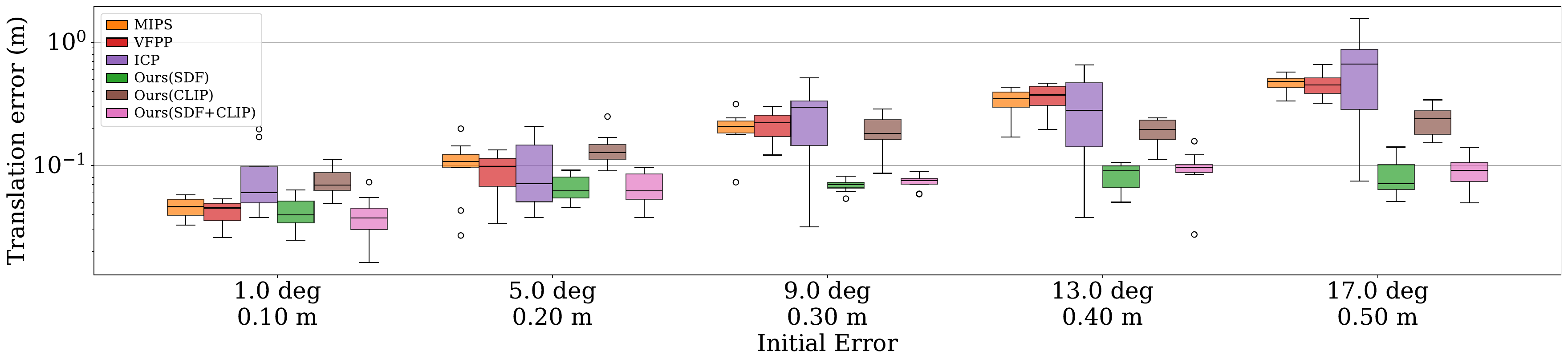}
        \captionsetup{justification=centering}
        \caption{Translation error (m)}
        \label{fig:sub_align_error_m}
    \end{subfigure}
    \caption{Submap alignment on ScanNet scene 0011 under varying initial errors over 10 trials for each configuration.}\label{fig:submap_alignment}
\end{figure}

%% file: sections/exp_outdoor.tex
\subsection{Evaluation on large-scale outdoor scenes}
\label{sec:experiments:newer_college}


\begin{table*}[t]
\centering
\renewcommand{\arraystretch}{1.3}
\caption{\edit{Evaluation on Newer College dataset \citep{zhang2021ncdmulti}.
For each scene, we report translation RMSE (m), rotation RMSE (deg), Chamfer-L1 error (cm), and F-score (\%) with a threshold of $20$ cm. Best and second-best results are highlighted in \textbf{bold} and \underline{underline}, respectively.}}
\label{tab:ncd}
\resizebox{0.8\textwidth}{!}{%
\begin{tabular}{|l|rrrr|rrrr|}
\hline
\multicolumn{1}{|c|}{Scene}  & \multicolumn{4}{c|}{Quad (1991 scans)}                                                                                                                                   & \multicolumn{4}{c|}{Maths Institute (2160 scans)}                                                                                                                        \\
\multicolumn{1}{|c|}{Method} & \multicolumn{1}{c}{Tran err. $\downarrow$} & \multicolumn{1}{c}{Rot err. $\downarrow$} & \multicolumn{1}{l}{C-l1 $\downarrow$} & \multicolumn{1}{l|}{F-score $\uparrow$} & \multicolumn{1}{c}{Tran err. $\downarrow$} & \multicolumn{1}{c}{Rot err. $\downarrow$} & \multicolumn{1}{l}{C-l1 $\downarrow$} & \multicolumn{1}{l|}{F-score $\uparrow$} \\ \hline
ICP \citep{zhou2018open3d} + \AlgName~Mapping                & 4.06                                       & 11.02                                     & 42.0                                  & 32.58                                   & 3.57                                       & 15.66                                     & 48.87                                 & 24.68                                   \\
KISS-ICP \citep{vizzo2023kiss} + \AlgName~Mapping           & \textbf{0.08}                              & {\ul 0.98}                                & \textbf{13.96}                        & {\ul 82.68}                             & 0.24                                       & 2.4                                       & \textbf{20.65}                        & {\ul 63.41}                             \\
PIN-SLAM \citep{pan2024pin}                    & {\ul 0.10}                                 & \textbf{0.97}                             & 14.74                                 & 78.85                                   & \textbf{0.15}                              & \textbf{1.37}                             & {\ul 20.66}                           & \textbf{70.18}                          \\
\AlgName (Odometry) & 0.47 & 1.58 & 18.53 & 68.41 & 0.25 & {\ul 1.57} & 23.28 & 57.78 \\
\AlgName (Full) & {\ul 0.10} & {\ul 0.98}                                & {\ul 14.01}                           & \textbf{83.0}                           & {\ul 0.22}                                 & {\ul 1.57}                                & 20.89                                 & 62.07                                   \\ \hline
\end{tabular}%
}
\vspace{-0.3cm}
\end{table*}

\begin{figure}[!ht]
    \centering
    \includegraphics[trim=0 100 0 30, clip,width=\linewidth]{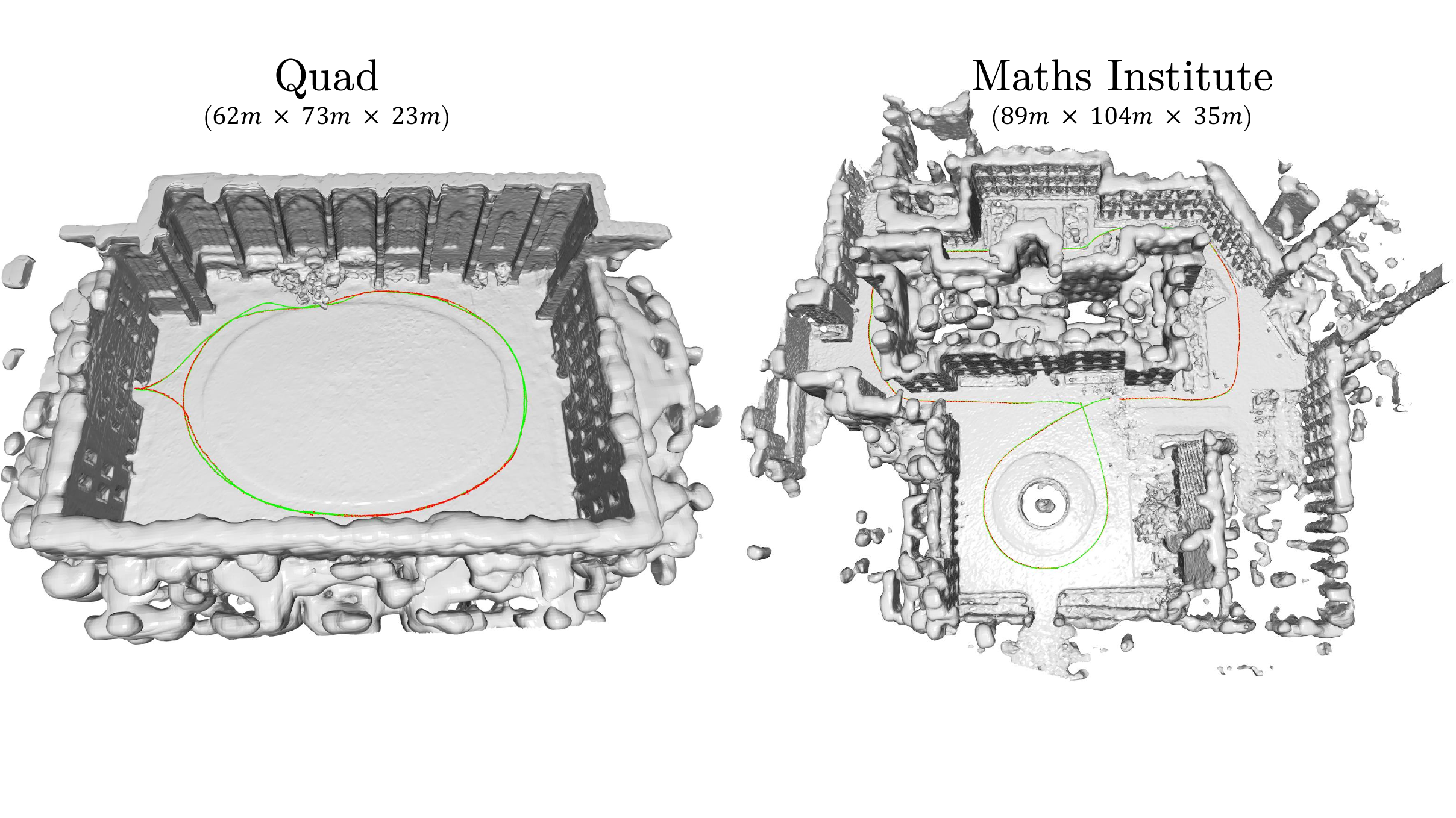}
    \caption{\edit{Qualitative evaluation on the Newer College dataset \citep{zhang2021ncdmulti} showing the final reconstructed meshes along with estimated and reference robot trajectories in red and green, respectively.}}
    \label{fig:ncd}
\end{figure}


In this section, we show \AlgName can be deployed in large-scale outdoor scenes. We focus on scene construction in this part and skip the object grounding due to the lack of ground-truth annotations. 

\myParagraph{Experiment setup} We choose the Quad-Easy and Maths-Easy sequences from the outdoor, \lidar-based Newer College Dataset \citep{zhang2021ncdmulti}.
For comparison, we run PIN-SLAM~\citep{pan2024pin}, KISS-ICP~\citep{vizzo2023kiss}, and point-to-point ICP in Open3D~\citep{zhou2018open3d} using their open-source implementations and default hyperparameters.
For KISS-ICP and point-to-point ICP, we obtain corresponding maps for evaluation by running our mapping pipeline with the pose estimates fixed.
For \AlgName, we first pre-train a decoder for each scene and then use the incremental implementation presented in \cref{sec:pose_optimizaion} to process each sequence.
Point-to-point ICP is used to initialize the tracking optimization at every scan, and 
a new submap is created every 400 scans.
We evaluate two variants of our method in this experiment.
The first (\method{Odometry}) only performs incremental tracking and mapping.
The second (\method{Full}) also performs submap alignment and fusion as described in \Cref{sec:global_optimization}.

\myParagraph{Results} The qualitative results are shown \Cref{fig:ncd}.
\Cref{tab:ncd} reports translation RMSE (m), rotation RMSE (deg), Chamfer L1 distance (cm), and F-score (\%).
In the Quad scene, our method achieves comparable performance with other state-of-the-art methods.
In the more challenging Maths Institute scene, \AlgName performs slightly worse than PIN-SLAM but still on par with KISS-ICP.
On both sequences, our results demonstrate that \AlgName substantially improves over the point-to-point ICP initialization. 
Further, the comparison between \method{Odometry} and \method{Full} validates the effectiveness of the proposed submap optimization approach to achieve global consistency.

%% file: sections/exp_ablation_studies.tex
\subsection{Ablation studies}
\label{sec:experiments:ablations}

We conclude the experiments with ablation studies to provide more insight into several design choices in \AlgName.

\myParagraph{Ablation on encoder and decoder pre-training}
The first ablation study investigates the performance improvements brought by pre-training the encoder and decoder networks in \AlgName.
For this, we use the same setup as the local mapping experiments in \cref{tab:scannet_mae}.
For each scene, we report the F-score (\%) calculated with a 5 cm threshold for SDF construction and cosine similarity for CLIP features. The results are presented at epochs 10 and 100 to demonstrate both early-stage and later-stage training performance.
In \Cref{tab:scannet_init}, we compare the following variants of \AlgName: 
\begin{itemize}
	\item \method{No-ED}: \Cref{alg:hier_local_mapping} without pre-trained decoder or encoder initialization. The grid features are initialized from a normal distribution with standard deviation $10^{-4}$, and are optimized jointly with the decoder.
    \item \method{No-E}: \Cref{alg:hier_local_mapping} using only pre-trained decoder and no encoder initialization. The grid features are initialized from a normal distribution with standard deviation $10^{-4}$.
    \item \method{Full}: \Cref{alg:hier_local_mapping} using both pre-trained decoder and encoder initialization.
\end{itemize}
As shown in \Cref{tab:scannet_init}, our hierarchical initialization scheme significantly speeds up training loss optimization, particularly when both the pre-trained encoder and decoder are used (see rows 1 and 3). Even with only the pre-trained decoder, optimization remains faster than training from scratch (rows 1 and 2). As expected, after sufficient training (\ie, at 100 epochs), all three methods converge in terms of training loss. Notably, using the pre-trained decoder yields better performance than training from scratch.
\edit{We attribute the slight degradation of \method{Full} compared to \method{No-E} in reconstruction F-score to domain mismatch since our encoders are pre-trained on synthetic Replica~\citep{straub2019replica} scenes only, while the vision-language cosine similarity difference in epoch 100 is neligible.
Overall, \cref{tab:scannet_init} still shows clear benefits of pre-trained encoders at early stages (10 epochs), which will be important for real-time estimation.} We do not report the F-score for \method{No-ED} at epoch 10 because it fails to produce a mesh due to insufficient training.

\begin{table}[t]
\centering
\renewcommand{\arraystretch}{1.6}
\caption{Ablation study on the effect of pre-trained encoder and decoder networks. For each scene, we report F-score (\%) for SDF construction and cosine similarity for vision-language feature at epochs 10 and 100. Results are averaged over five runs, and the best outcomes are highlighted in \textbf{bold}.}
\label{tab:scannet_init}
\begingroup
\resizebox{\linewidth}{!}{%
\begin{tabular}{|l|rr|rr|rr|rr|}
\hline
\multicolumn{1}{|c|}{Scene} & \multicolumn{2}{c|}{0011 \vspace{-0.25em}} & \multicolumn{2}{c|}{0011 \vspace{-0.25em}} & \multicolumn{2}{c|}{0207 \vspace{-0.25em}} & \multicolumn{2}{c|}{0207 \vspace{-0.25em}} \\ 
\multicolumn{1}{|c|}{} & \multicolumn{2}{c|}{\fontsize{6pt}{7pt}\selectfont (10 epoch) \vspace{-0.15em}} & \multicolumn{2}{c|}{\fontsize{6pt}{7pt}\selectfont (100 epoch) \vspace{-0.15em}} & \multicolumn{2}{c|}{\fontsize{6pt}{7pt}\selectfont (10 epoch) \vspace{-0.15em}} & \multicolumn{2}{c|}{\fontsize{6pt}{7pt}\selectfont (100 epoch) \vspace{-0.15em}}\\
\multicolumn{1}{|c|}{Method} 
& \multicolumn{1}{c}{F-score$\uparrow$} & \multicolumn{1}{c|}{cos sim.$\uparrow$}& \multicolumn{1}{c}{F-score$\uparrow$} & \multicolumn{1}{c|}{cos sim.$\uparrow$}
& \multicolumn{1}{c}{F-score$\uparrow$}& \multicolumn{1}{c|}{cos sim.$\uparrow$}& \multicolumn{1}{c}{F-score$\uparrow$} & \multicolumn{1}{c|}{cos sim.$\uparrow$}\\
\hline
{\method{No-ED}} 
& - & 76.4 & 59.3 & 92.9
& - & 74.1 & 58.5 & 89.5 \\
{\method{No-E}} 
& 66.9 & 92.4 & \textbf{77.7} & \textbf{95.5}
& 64.5 & 90.6 & \textbf{73.9} & \textbf{92.4} \\
{\method{Full}} 
 & \textbf{70.8} & \textbf{94.9} & 75.2 & 95.3 
 & \textbf{67.8} & \textbf{92.1} & 71.6 & 92.3 \\
\hline
\end{tabular}%
}
\endgroup
\end{table}

\myParagraph{Ablation on hierarchical alignment}
To investigate the effects of each alignment stage in \Cref{alg:hier_alignment}, we conduct ablation experiments on ScanNet scenes 0011 and 0207, each containing four submaps. We use the same setup as in the submap alignment experiments. In \cref{tab:hierarchical_alignment_ablation}, we compare the following variants of \AlgName:

\begin{itemize}
	\item \method{Coarse only}: \Cref{alg:hier_alignment} with only coarse grid ($l=1$) feature-based alignment.
	\item \method{Coarse+Fine}:  \Cref{alg:hier_alignment} with both coarse grid ($l=1$) and fine grid ($l=2$) feature-based alignment.
	\item \method{Full}: default \Cref{alg:hier_alignment} with both coarse grid ($l=1$) and fine grid ($l=2$) feature-based alignment, and a final alignment stage using predicted SDF values or CLIP features.
\end{itemize}
In \Cref{tab:hierarchical_alignment_ablation}, for SDF features, one can see that the full hierarchical approach (coarse, fine, and SDF) achieves the smallest rotation and translation errors, whereas using only the coarse alignment yields the largest errors. As expected, the full approach requires more computation time, as the last stage based on SDF alignment requires using the decoder to predict SDF values. For CLIP features, we do not observe significant improvement over coarse-only alignment after using fine-level features and final decoded features. Our hypothesis is that CLIP features are object-level (see \Cref{fig:feature_lifting}) and usually remain consistent in neighboring fine-level grids, causing relatively small differences between using coarse-level features and fine-level features.

\begin{table}[t]
    \centering
    \caption{
    Ablation study on hierarchical alignment using SDF and CLIP features on ScanNet
    from large initial errors of $10^\circ$ and $0.25$\,m.
    We report the alignment solver time (sec) and the final submap rotation error
    (deg) and translation error (m) compared to ground truth.
    Results are averaged over 10 trials. Best results are highlighted in \textbf{bold}.
    }
    \label{tab:hierarchical_alignment_ablation}
    
    \resizebox{\linewidth}{!}{
    \begin{tabular}{|ll|ccc|ccc|}
        \hline
        & & \multicolumn{3}{c|}{0011 (4 submaps)}
          & \multicolumn{3}{c|}{0207 (2 submaps)} \\
        \cline{3-8}
        Feature & Method
        & Time$\downarrow$ & Rot err$\downarrow$ & Tran err$\downarrow$ & Time$\downarrow$ & Rot err$\downarrow$ & Tran err$\downarrow$ \\
        \hline
        
        \multirow{3}{*}{SDF} & Coarse only & \textbf{1.20} & 4.73 & 0.16 & \textbf{0.25} & 4.86 & 0.07 \\
        & Coarse+Fine & 2.54 & 3.68 & 0.12 & 0.50 & 1.89 & 0.07 \\
        & Full & 10.13 & \textbf{2.71} & \textbf{0.09} & 1.68 & \textbf{1.11} & \textbf{0.05} \\

        \hline

        \multirow{3}{*}{CLIP} & Coarse only & \textbf{1.50} & \textbf{4.74} & 0.16
        & \textbf{0.28} & 5.03 & \textbf{0.12} \\
        & Coarse+Fine & 3.49 & 5.54 & 0.16 & 0.64 & 4.41 & 0.13 \\
        & Full & 4.06 & 4.95 & \textbf{0.15} & 0.72 & \textbf{4.10} & 0.13 \\

        \hline
    \end{tabular}
    }
\end{table}

%% file: sections/conclusion.tex
\section{Conclusion and future work}

In this work, we present \AlgName, a unified hierarchical 3D representation and optimization for joint geometric and semantic embedding. The hierarchical 3D grids together with the query and decoding functions enable latent mapping for different kinds of features and yield great memory efficiency. To further improve compute efficiency, we introduce techniques including hierarchical feature initialization with pre-trained encoders, submap decomposition, and hierarchical latent feature optimization for submap alignment. We demonstrate \AlgName can perform well in both scene construction and open-volcabuary object grounding. We explore how geometric features and semantic features can enhance each other, namely geometric features can facilitate semantic understanding by providing more fine-grained structure information while semantic features can help improve submap alignment by enforcing feature consistency. We conduct sufficient experiments on both indoor and outdoor datasets to demonstrate the effectiveness of our method. Future works may include leveraging the scene understanding capability for task planning like navigation, loco-manipulation, and 3D outdoor scene understanding.

%% file: sections/appendix_proof.tex
\section{Proofs}
\label{sec:proof}

\begin{proof}[Proof of \Cref{lem:linear_case}]
Without loss of generality, we assume the features are represented as a vector $F_l \in \Real^{|F_l|}$.
In the following, all points are expressed in the coordinate frame of the submap.
First, consider a single observed point $\bfx_j \in \Real^3$ and its label $y_j \in \Real$.
At each level $l$, the output feature 
$f_l(\bfx_j)$ in \Cref{def:grid} is a linear function of the features $F_l$ at this level.
This means that there exists a matrix $K_l(\bfx_j) \in \Real^{d \times |F_l|}$ such that
$f_l(\bfx_j) = K_l(\bfx_j) F_l$.
When the decoder $D_\theta$ is a linear function, the final output from the multiresolution feature grid is a weighted sum of features from all levels, \ie,
\begin{equation}
\begin{aligned}
h(\bfx_j;F)
&=
D_\theta \bigl(
	\oplus_{l \in [L]}
	f_l(\bfx_j)
\bigr)  \\
&=
\sum_{l=1}^L \theta_l^\top \! f_l(\bfx_j) 
= 
\sum_{l=1}^L  \theta_l^\top \! K_l(\bfx_j) F_l.
\end{aligned}
\end{equation}
We use $l$ to specifically refer to the target level that we seek to initialize in \eqref{eq:level_local_mapping}.
The measurement residual is,
\begin{equation}
h(\bfx_j;F) - y_j \!\!=\!\! 
\underbrace{\sum_{l'=1}^{l-1}\! \theta_{l'}^\top K_{l'}(\bfx_j) F_{l'} \!-\! y_j}_{r_{1:l-1}(\bfx_j)}
+ \theta_{l}^\top \! K_{l}(\bfx_j) F_{l},
\end{equation}
where we have used the fact that features at levels greater than $l$ are zero.
We can concatenate the above equation over all observations $\bfx_1, \hdots, \bfx_N$ as follows,
\begin{equation}
h(\bfx;F) - y = 
\underbrace{
\begin{bmatrix}
r_{1:l-1}(\bfx_1) \\
\vdots \\
r_{1:l-1}(\bfx_N)
\end{bmatrix} 
}_{r_{1:l-1}(\bfx)}
+ 
\underbrace{
\begin{bmatrix}
\theta_{l}^\top K_{l}(\bfx_1) \\
\vdots \\
\theta_{l}^\top K_{l}(\bfx_N)
\end{bmatrix}
}_{J}
F_l.
\end{equation} 
Note that the matrix $J$ is exactly the Jacobian of our model $h(\bfx; F)$ with respect to the level-$l$ features $F_l$. 
Substituting the above equation into \eqref{eq:level_local_mapping} shows that the problem is equivalent to the following linear least squares,
\begin{equation}
\underset{{F_l}}{\min} \norm{h(\bfx; F) - y}^2_2 = \norm{r_{1:l-1}(\bfx) + J F_l}^2_2.
\end{equation}
The solution is obtained by solving the normal equations, yielding the final expression in 
\eqref{eq:level_local_mapping_closed_form}.
\end{proof}

%% file: sections/appendix_details.tex
\section{Implementation and Training Details}
\label{sec:appendix_details}


\subsection{Map decoders}
\label{sec:map_decoders}
We introduce the architecture and training details of decoders for both SDF and vision-language features in this subsection.

\myParagraph{SDF decoder}
The decoder network $D^{\text{sdf}}_\theta$ is a single-layer MLP with hidden dimension 64.
To train the decoder network $D^{\text{sdf}}_\theta$ offline, we compile an offline training dataset using six scenes from Replica \citep{straub2019replica}. 
Within each scene $s \in S$, we randomly simulate 128 camera views with ground truth pose information. 
During training, all scenes share a single set of decoder parameters $\theta$, 
and each scene $s$ has its own dedicated grid features $F^{s}$.
We perform joint optimization using the SDF loss in \Cref{prob:local_mapping},
We use Adam \cite{kingma2014adam} with a learning rate of $10^{-3}$ and train for a total of 1200 epochs. 
During training, we employ a coarse-to-fine strategy where all fine level features are activated after 200 epochs. 
After training completes, the grid features $F^s$ are discarded and only decoder parameters $\theta$ is saved.

\myParagraph{Vision-language decoder}
The decoder network $D^{\text{vl}}_\theta$ is a five-layer MLP with hidden dimensions 256, 256, 512, 512, 768. To train this decoder, we pre-process all the 1114 training scenes in Sr3D~\citep{achlioptas2020referit3d}, Nr3D~\citep{achlioptas2020referit3d}, and ScanRefer~\citep{chen2020scanrefer} following the same way as Locate-3D~\citep{mcvay2025locate} so we can get the point and feature set $\{X^k\}_k \ \{Y^k\}_k$ (see \Cref{fig:feature_lifting} for an illustration). Same as training the SDF decoder, we perform joint optimization in \Cref{prob:local_mapping} but using the vision-language feature loss. We use Adam \cite{kingma2014adam} with a learning rate of $10^{-3}$. To keep memory usage manageable, we process 100 scenes at a time, training for 40 epochs per chunk with batches of 10 scenes and up to 30,000 sampled points per scene. We then save the scene features and optimization states to disk, and move to the next batch of scenes. All scenes share a single set of decoder parameters $\theta$, and each scene $s$ has its own dedicated grid features $F^{s}$. 
The obtained grid features $F^s$ can then be used to train our object grounding network.

\subsection{Map initialization encoders}
\label{sec:map_encoders}

We introduce the architecture and training details for the map initialization encoders for both SDF and vision-language features in this subsection.

\myParagraph{SDF map initialization encoder} For the SDF encoder network $E^{\text{sdf}}_{\phi_l}$ at each level $l$, we first use a small 3D CNN to process the input voxelized features.
Each CNN has 2 hidden layers, a kernel size of 3, and the number of feature channels doubles after every layer.
The CNN outputs at all 3D vertices are processed by a shared two-layer MLP with hidden dimension 16 to predict the target feature grid $F_l$.
We use ReLU as the nonlinear activation for all networks. The offline training of encoder networks follows a similar setup and uses the same Replica scenes.
The design of the training loss is presented in \cref{sec:local_mapping} in the main paper.
We sequentially train the encoder networks from coarse to fine levels.
At level $l$, we use the trained encoders from previous levels to compute $F^s_{1:l-1}$, which are needed to evaluate the training loss in \eqref{eq:encoder_training_loss}.
To account for noisy pose estimates at test time, we simulate pose errors  with $1$~deg and $1$~cm  rotation and translation standard deviations. 
Each encoder is trained using Adam \cite{kingma2014adam} with a learning rate of $10^{-3}$ for 1000 epochs.

\myParagraph{Vision-language map initialization encoder}
For the vision-language initialization encoder $E^{\text{vl}}_{\phi_l}$ at each level $l$,
we first use a six-layer, 512-channel MLP with BatchNorm and ReLU to embed the raw point features into a latent space. Each point's feature is distributed to its eight neighboring grid vertices using trilinear weights. Those features at each grid vertex are aggregated by max pooling. The pooled features are then processed by a 256-channel FPT~\citep{park2022fast} attention block, and a two-layer MLP (256→256→64) to predict latent grid features. 
To predict the raw point features, we trilinearly interpolate the predicted latent grid features in both levels, concatenate them into a 128-dimensional vector, and pass it through the pretrained decoder. We use the same cosine similarity loss as mapping (\Cref{sec:local_mapping}) to train the encoder. The decoder remains frozen during training. We train the encoders for both levels jointly and use a loss weight of 0.25 for prediction based on coarse-level features and a loss weight of 1 for prediction based on both coarse- and fine-level features. We train the encoder for 3000 epochs using Adam with a learning rate of $10^{-3}$.

\subsection{Object-grounding model}
\label{sec:object_grounding_model_details}

We follow the same number of attention blocks and the same latent feature dimensions in the encoder of Locate-3D~\citep{mcvay2025locate} but use the FPT~\citep{park2022fast} architecture. Locate-3D has two stages of training: 3D-JEPA self-supervised pre-training of the encoder and joint encoder-decoder pose-training. We skip the pretraining stage by distilling the 3D-JEPA features to our encoder (either with or without SDF features). More specifically, we run the 3D-JEPA encoder on the original Locate-3D point clouds and interpolate the encoded features to the 3D grids. Then we supervise our encoders by computing the cosine similarity loss between our encoder output and the interpolated 3D-JEPA features. In this distillation stage, the encoders are trained with batch size 128, learning rate $10^{-3}$, and 2000 epochs. For each scene, we randomly sample maximal $30000$ points at each iteration. Then, we connect our encoders with the trained Locate-3D decoder and post-train them jointly. We do this mainly to skip the tedious pre-training process, and this does not affect conveying our main ideas. In fact, the accuracy without post-training is very low, \ie, $\text{Acc@25}=0.19$ and $\text{Acc@50}=0.08$. For the post-training of the object grounding network, we train it on all the annotations of the three datasets (Sr3D~\citep{achlioptas2020referit3d}, Nr3D~\citep{achlioptas2020referit3d}, and ScanRefer~\citep{chen2020scanrefer}) for 20 epochs with a learning rate of $3 \times 10^{-5}$ and a batch size of 48. We perform stochastic weight averaging~\citep{izmailov2018averaging} in the last 10 epochs to get the final model weights.